\def\eqref#1{equation~\ref{#1}}
\def\1{\bm{1}}
\DeclareMathAlphabet{\mathsfit}{\encodingdefault}{\sfdefault}{m}{sl}
\SetMathAlphabet{\mathsfit}{bold}{\encodingdefault}{\sfdefault}{bx}{n}
\newtheorem{lemma}{Lemma}
\newtheorem{proof}{Proof}[section]
\newtheorem{definition}{Definition}
\title{Rethinking skip connection model as a learnable Markov chain}
\author{Dengsheng Chen$^*$ \\
Meituan \\
\texttt{chendengsheng@meituan.com} \\
\And
Jie Hu\thanks{Equal contribution.} \\
State Key Laboratory of Computer Science, ISCAS \\ 
University of Chinese Academy of Sciences \\
\texttt{hujie@ios.ac.cn} \\
\And
Wenwen Qiang \\
University of Chinese Academy of Sciences \\
Institute of Software Chinese Academy of Sciences \\
\texttt{wenwen2018@iscas.ac.cn}
\And
Xiaoming Wei \\
Meituan \\
\texttt{weixiaoming@meituan.com}
\And
Enhua Wu\thanks{Corresponding author. The work is supported in part by NSFC Grants (62072449).}\\ 
State Key Laboratory of Computer Science, ISCAS \\
University of Chinese Academy of Sciences \\
University of Macau \\
\texttt{ehwu@um.edu.mo}
}
\begin{document}

\maketitle

\vspace{-20pt}
\begin{abstract}

Over the past few years afterward the birth of ResNet, skip connection has become the defacto standard for the design of modern architectures due to its widespread adoption, easy optimization, and proven performance.
Prior work has explained the effectiveness of the skip connection mechanism from different perspectives.
In this work, we deep dive into the model's behaviors with skip connections which can be formulated as a learnable Markov chain.
An efficient Markov chain is preferred as it always maps the input data to the target domain in a better way.
However, while a model is explained as a Markov chain, it is not guaranteed to be optimized following an efficient Markov chain by existing SGD-based optimizers prone to getting trapped in local optimal points.
In order to move towards a more efficient Markov chain, we propose a simple routine of penal connection to make any residual-like model become a learnable Markov chain.
Aside from that, the penal connection can also be viewed as a particular model regularization and can be easily implemented with one line of code in the most popular deep learning frameworks. 
The encouraging experimental results in multi-modal translation and image recognition empirically confirm our conjecture of the learnable Markov chain view and demonstrate the superiority of the proposed penal connection.
\end{abstract}

\vspace{-5pt}
\section{Introduction}
\vspace{-5pt}

Over the last decade, deep learning has been dominant in many tasks, including image recognition~\citep{voulodimos2018deep}, machine translation~\citep{singh2017machine}, speech recognition~\citep{zhang2018deep}, etc.
Many SGD-based methods and excellent network structures come to the fore~\citep{alom2019state}.
Among them, skip connection~\citep{he2016deep} is a widely-used technique to improve the performance and the convergence of deep neural networks.
Aided by the skip connection, models with very deep layers can be easily optimized by SGD-based methods~\citep{amari1993backpropagation}, e.g., vanilla SGD~\citep{cherry1998sgd}, Momentum SGD~\citep{sutskever2013importance}, Adagrad~\citep{lydia2019adagrad}, Adam~\citep{kingma2014adam}.

Recently, many theoretical explanations of how it works have been largely underexplored~\citep{li2017convergence,allen2019convergence}.
In this work, we continued to explore the behaviors of the model with skip connection and view it as a \textbf{learnable Markov chain} (short for Markov chain)~\citep{gagniuc2017markov} . To our best knowledge, it is the first time to analyze from this perspective.
In the conception of the Markov chain, the output of a residual block is noted as the \textit{predicted direction} with respect to the input. 
For better elaboration, we introduce another term \textit{ideal direction}. The ideal direction
 always points to a more accurate direction than the predicted direction, which can translate an input to the target domain in a more efficient way. 
Then, we define an indicator $\varepsilon$ to reflect how efficient a learned Markov chain is, based on the angle between the predicted and ideal direction.
In contrast to the original predicted direction, an efficient Markov chain with an ideal direction is preferred since it always maps the input to the target domain in a better way.
However, we are aware that existing SGD-based optimizers are quite lazy to update the model following an efficient Markov chain, which hinders the upper bound performance.

To train a more efficient Markov chain, we propose a very simple routine of penal connection to convert a residual-like model to a Markov chain by just adding one line of code in existing deep learning frameworks.
On the one hand, the penal connection is capable of enforcing the optimizer to update the model following the rules of the efficient Markov chain.
On the other hand, it can be viewed as a type of additional model regularization, which alleviates over-fitting and enhances generalization.
Compared with the original residual-like model, the Markov chain also has more benefits in deeper networks that suffer from performance degradation corresponding to more learnable parameters.
The experimental results in multi-modal translation and image recognition not only demonstrate the feasible analysis of regarding a residual-like model as a Markov chain but also examine the superiority of the proposed penal connection throughout the optimization process.

Our main contributions can be summarized in two folds.
First, we present a new perspective to understand the skip connection model as a learnable Markov chain and carry out exhaustive theory analysis and experimental verification.
Second, we propose the penal connection, a simple method to enable a network to be optimized within a more efficient Markov chain, which can substantially improve performances both in the fields of NLP and CV.

\begin{figure}[htbp]
\centering 
\subfigure[]{ 
\begin{minipage}{0.25\textwidth}
\centering 
\includegraphics[height=5cm]{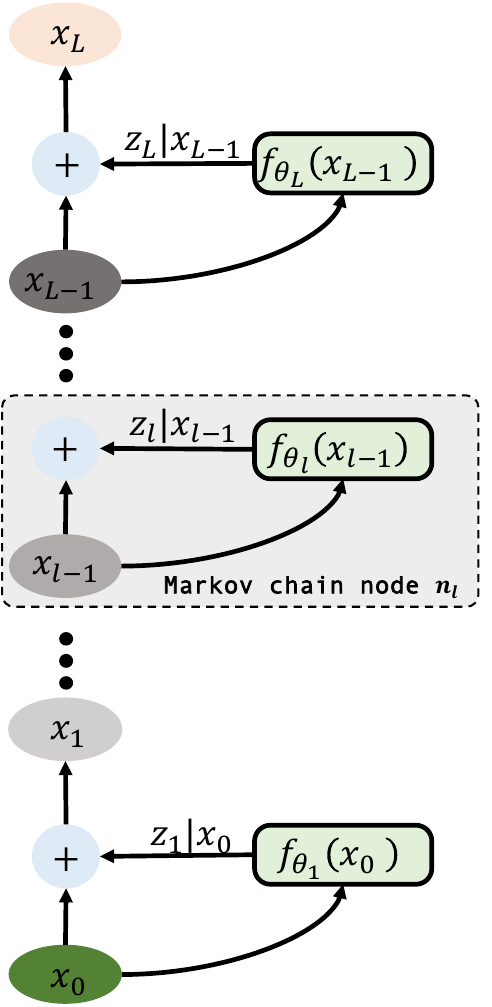}
\label{fig:teaser.a}
\end{minipage}
}
\subfigure[]{
\begin{minipage}{0.725\textwidth}
\centering
\includegraphics[height=5cm]{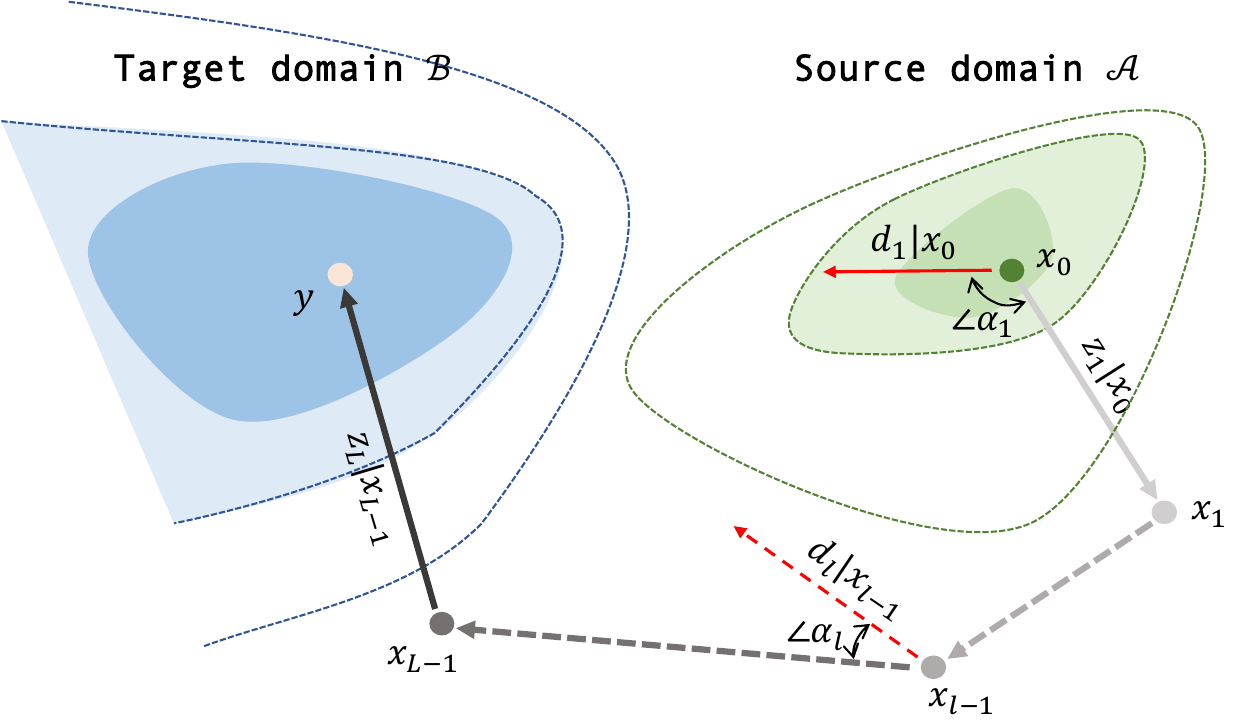}
\label{fig:teaser.b}
\end{minipage}
}
\vspace{-12pt}
\caption{
A model $\mathcal{M}$ with $L$ skip connections can be recognized as a Markov chain $\mathcal{C}$ consisting of $L$ nodes. The forward pass is corresponding to a Markov process.
As shown in Fig.~\ref{fig:teaser.a}, a skip connection along with a residual-like block $f_{\theta_{l}}(\cdot)$ builds up a Markov chain node $n_{l}$ (the gray dash box in middle). 
The input of $n_{l}$ is $x_{l-1}$, i.e., the output of the previous Markov node.
The output of $n_{l}$ can be formulated by $x_{l} = x_{l-1} + z_l|x_{l-1}$, where $z_l|x_{l-1} = f_{\theta_{l}}(x_{l-1})$ is the predicted direction by residual-like block.
As shown in Fig.~\ref{fig:teaser.b}, guided by $z_{l,l \in {1, \cdots, L}}$, the input data $x_0 \in \mathcal{A}$ can gradually shift to the target label $y \in \mathcal{B}$ along the learned Markov chain.
The red dashed arrow $d_l|x_{l-1}$ is the ideal direction with respect to $x_{l-1}$ and $z_{l}$.
} 
\label{fig:teaser} 
\end{figure}

\vspace{-10pt}
\section{Method}
\vspace{-5pt}
In this section, we first reformulate the residual-like model as a Markov chain and introduce $\varepsilon$ to reflect the efficiency of a learned Markov chain.
Then, we also define a $\delta$-convex chain and make the convergence proof based on it.
Before exploring the optimization algorithm, the dilemma between the behavior of an efficient Markov chain and existing backward-propagation algorithms is thoroughly discussed.
Lastly, we propose a penal connection mechanism to boost the performance of the Markov chain.

\subsection{The learnable Markov chain}

Similar to the definition of a traditional Markov chain in ~\citet{gagniuc2017markov}, a learnable Markov chain can be defined as:

\begin{definition}
(\textbf{The learnable Markov chain:} $\mathcal{C}_{L}$). 
A learnable Markov chain $\mathcal{C}_{L}$ is a stochastic model with learnable parameters $\{ \theta_{1}, \cdots, \theta_{L} \}$ describing a sequence of $L$ possible events in which the state $x_{l}$ of the current event only depends on the state $x_{l-1}$ attained in the previous event.
\end{definition}

\begin{figure}[htbp]
\vspace{-20pt}
\centering
\subfigure[The influence of efficient/inefficient Markov chain.]{
\begin{minipage}{0.48\textwidth}
\centering 
\includegraphics[width=0.95\linewidth]{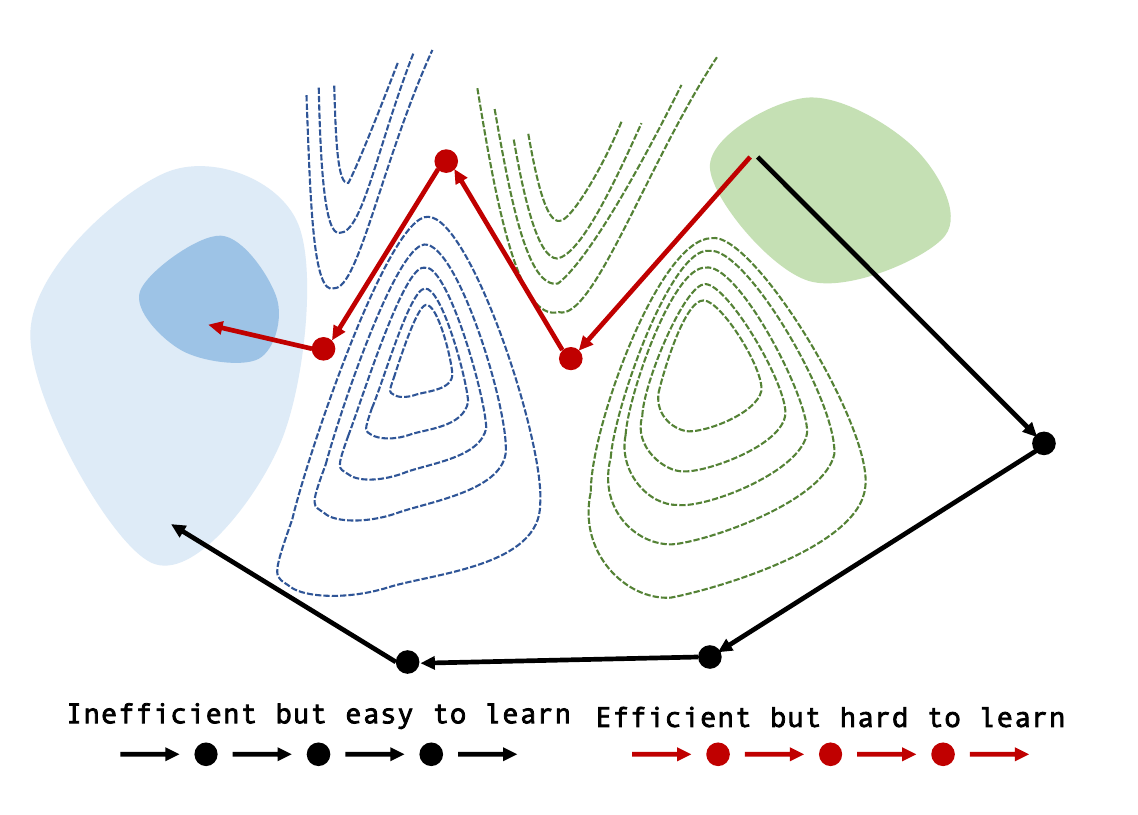}
\label{fig: efficient inefficient}
\end{minipage}
}
\subfigure[A toy model to demonstrate the importance of a suitable $\varepsilon$.]{
\begin{minipage}{0.48\textwidth}
\centering
\includegraphics[width=0.95\linewidth]{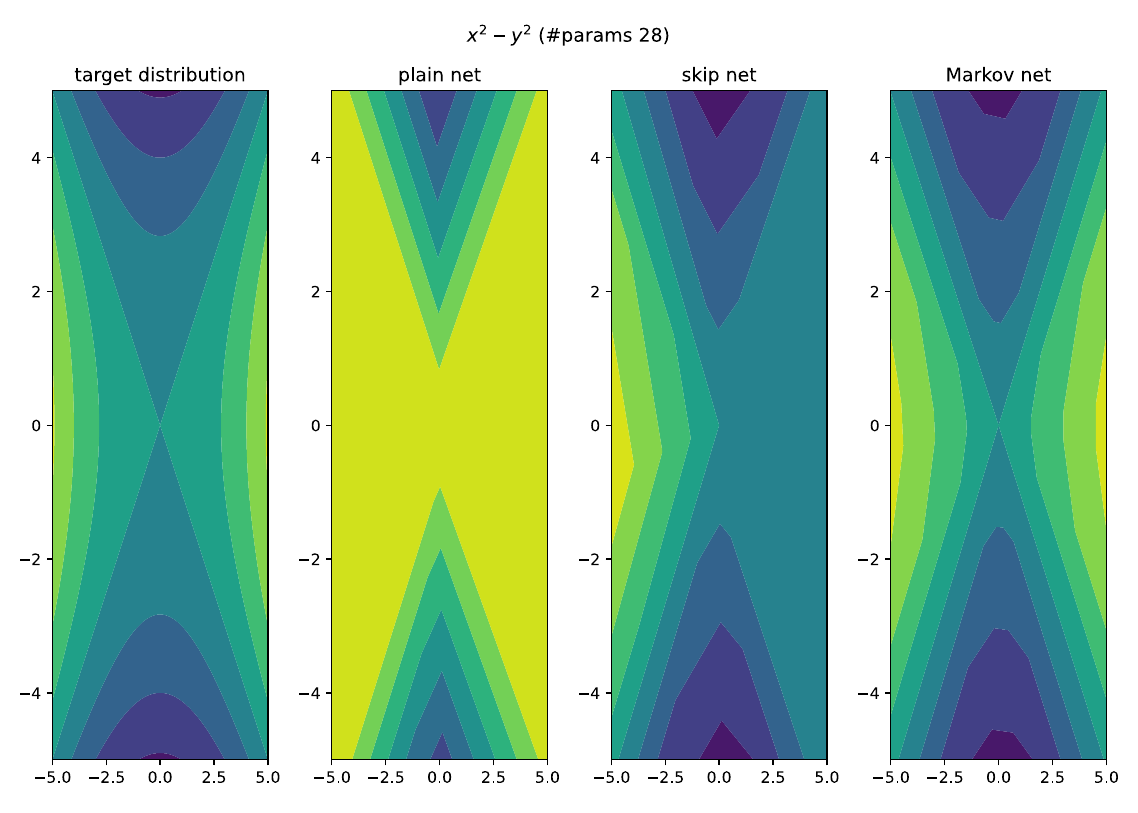}
\label{fig: predict saddle}
\end{minipage}
}
\vspace{-12pt}
\caption{
A simple task to show the performance difference between plain net, skip net, and our proposed Markov net.
In this task, we build a model which consists of three fully connected layers with 28 learnable parameters to shift coordinate $(x, y)$ to a target distribution, which can be formulated as $x^2-y^2$. 
More details are listed in supplementary materials. 
We update the model with SGD optimizer for 10000 steps.
As shown in Fig.\ref{fig: predict saddle}, Markov net can better complete this task.
}
\label{fig: the Markov chain demo}
\vspace{-12pt}
\end{figure}

As shown in Fig.~\ref{fig:teaser.a}, $\mathcal{M}_{L}$ indicates a model with $L$ residual-like blocks, 
and can also be considered as a learnable Markov chain $\mathcal{C}_{L}$ with $L$ nodes since the output state (i.e. output feature map of a residual block) $x_{l}$ of node $n_{l}$ only depends on the state $x_{l-1}$ from the previous node $n_{l-1}$.
As a result, a forward pass through $\mathcal{M}_{L}$ can be viewed as a Markov process, as shown in Fig.~\ref{fig:teaser.b}. In a bit more detail, the corresponding Markov chain with respect to the input data $x_{0} \in \mathcal{A}$ is formulated as:

\begin{equation}
\mathcal{C}_{L}(x_0 \to x_{L}) := 
    x_0 \xrightarrow{+ z_1 | x_0} 
    x_1  \rightarrow \cdots
    x_l \xrightarrow{+ z_l | x_{l-1}}
    x_{l+1} \rightarrow \cdots 
    x_{L-1} \xrightarrow{+ z_L | x_{L-1}}
    x_L
\end{equation}

where $f_{\theta_{l}}$ is a feature transformation function by $i$-th residual-like block in $\mathcal{M}_{L}$, and is also the $i$-th chain node in $\mathcal{C}_{L}$ with learnable parameters $\theta_{l}$.
$z_l | x_{l-1} = f_{\theta_{l}}(x_{l-1})$ is the predicted direction for previous state $x_{l-1}$ in node $n_{l}$. Correspondingly, the ideal direction $d_{l}|x_{l-1}$ with respect to $x_{l-1}$ and $z_{l}$ can be defined as:

\vspace{-3pt}
\begin{definition} \label{def:ideal_direction}
(\textbf{The ideal direction:} $d_{l}|x_{l-1}$). 
Assume the function $\ell$ measures the distance between two variables, 
and if $\ell(a, c) \ge \ell(b, c)$, then $\ell(a, c) \ge \ell(\mu a + (1 - \mu) b, c) \ge \ell(b, c)$, where $\mu \in [0, 1]$.
$d_{l} | x_{l-1}$ is an ideal direction with respect to $x_{l-1}$ and $z_{l}$ as long as:

\begin{equation}
    \ell(x_{l-1} + \eta d_{l}|x_{l-1}, y) \le \ell(x_{l-1} + \eta z_{l}|x_{l-1}, y).
    \label{eq:ideal_direction}
\end{equation}

where $\eta$ is a small step size.
\end{definition}

Obviously, given $x_{l-1}$, $z_{l}$ and $\ell$, 
the ideal direction $d_{l}$ is still not unique because anyone who can outperform the predicted direction $z_{l}$ under the measurement of $\ell$ is qualified, as Eq.~\ref{eq:ideal_direction} holds.
We collect all the ideal directions for $x_{l-1}, z_{l}$ under $\ell$ as $\mathcal{D}_{\ell, z_{l}|x_{l-1}}$. 

\vspace{-3pt}
\begin{lemma} \label{lemma:interpolation_ideal_direction}
If $d_{l} \in \mathcal{D}_{\ell,z_{l}|x_{l-1}}$, then $d_{l}^\prime = \mu d_{l} + (1 - \mu) z_{l} \in \mathcal{D}_{\ell, z_{l}|x_{l-1}}$, where $\mu \ge 0$.
\end{lemma}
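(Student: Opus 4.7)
The plan is to invoke the quasi-convexity-like condition on $\ell$ that is built into Definition~\ref{def:ideal_direction} and apply it to the two points $x_{l-1}+\eta z_l$ and $x_{l-1}+\eta d_l$. More precisely, I would first unpack the hypothesis $d_l \in \mathcal{D}_{\ell,z_l|x_{l-1}}$ to get the basic inequality
\begin{equation*}
\ell(x_{l-1}+\eta d_l, y) \le \ell(x_{l-1}+\eta z_l, y),
\end{equation*}
set $a = x_{l-1}+\eta z_l$ and $b = x_{l-1}+\eta d_l$ so that $\ell(a,y) \ge \ell(b,y)$, and then observe that
\begin{equation*}
x_{l-1}+\eta d_l^{\prime} = x_{l-1} + \eta\bigl(\mu d_l + (1-\mu) z_l\bigr) = \mu b + (1-\mu) a.
\end{equation*}
Invoking the interpolation property from Definition~\ref{def:ideal_direction} with weight $1-\mu$ on $a$ and weight $\mu$ on $b$ (legal whenever $\mu \in [0,1]$) then immediately yields $\ell(x_{l-1}+\eta d_l^{\prime}, y) \le \ell(a,y) = \ell(x_{l-1}+\eta z_l, y)$, which is exactly the defining inequality for membership in $\mathcal{D}_{\ell,z_l|x_{l-1}}$.

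The only genuine sticking point, and what I would flag as the main obstacle, is the range of $\mu$. Definition~\ref{def:ideal_direction} only gives the monotone-interpolation property for convex combinations (weights in $[0,1]$), so the argument above works cleanly for $\mu \in [0,1]$ but not for $\mu > 1$, whereas the lemma statement writes $\mu \ge 0$. I would therefore either (i) restrict the statement to $\mu \in [0,1]$, treating the loose bound as a typo, or (ii) for $\mu>1$ attempt an iterated/limit argument, but note that extrapolating past $d_l$ in general has no reason to stay inside $\mathcal{D}_{\ell,z_l|x_{l-1}}$, since Definition~\ref{def:ideal_direction} only constrains the behavior of $\ell$ on segments. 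The cleanest resolution is to present the proof for $\mu \in [0,1]$ and remark that this is the only range the hypothesis on $\ell$ can support.

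Beyond that caveat, no further machinery is required: the lemma is essentially a one-line consequence of the interpolation clause embedded in the definition of an ideal direction, and the rewriting $\mu d_l + (1-\mu) z_l$ into a convex combination of the endpoints $x_{l-1}+\eta z_l$ and $x_{l-1}+\eta d_l$ is the only algebraic step worth writing out explicitly.
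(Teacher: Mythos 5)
Your proof is correct and takes essentially the same route as the paper: rewrite $x_{l-1}+\eta d_l^{\prime}$ as the convex combination $\mu\,(x_{l-1}+\eta d_l) + (1-\mu)(x_{l-1}+\eta z_l)$ and apply the monotone-interpolation property from Definition~\ref{def:ideal_direction} to sandwich $\ell(x_{l-1}+\eta d_l^{\prime},y)$ between $\ell(x_{l-1}+\eta d_l,y)$ and $\ell(x_{l-1}+\eta z_l,y)$. Your caveat about the range of $\mu$ is also well taken: the paper's own proof invokes the interpolation property in exactly the same way and therefore likewise only establishes the claim for $\mu\in[0,1]$, even though the lemma is stated with $\mu\ge 0$; nothing in Definition~\ref{def:ideal_direction} controls extrapolation past $d_l$. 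Note that this gap is harmless downstream, since the paper only ever applies the lemma with $\mu = 1/\lVert g_{x_{l}}\rVert_2$ under the explicit assumption $\lVert g_{x_{l}}\rVert_2\ge 1$, i.e.\ with $\mu\in[0,1]$.
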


\vspace{-3pt}
\begin{proof}
Since $d_{l}$ is an ideal direction, then:
\begin{align}
    \ell(x_{l-1} + \eta d_{l}, y) & \le \ell(\mu (x_{l-1} + \eta d_{l}) + (1-\mu)(x_{l-1} + \eta z_{l}), y)\nonumber\\
    &= \ell(x_{l-1} + \eta(\underbrace{\mu d_{l} + (1-\mu)z_{l}}_{d_{l}^\prime}), y) \nonumber\\
    &\le \ell(x_{l-1} + \eta z_{l}, y), 
\end{align}
thus, $d_{l}^\prime=\mu d_{l} + (1 - \mu) z_{l}$ is an ideal direction with respect to $z_{l}$ and $x_{l-1}$.
\end{proof}

Different function $f_{\theta_{l}}$ takes discrepant effects on the set $\mathcal{D}_{\ell, z_{l}|x_{l-1}}$.
$f_{\theta_l}$ is used to be a sequence of convolutions (e.g. residual block in ResNet~\citep{he2016deep})or a popular \textit{Transformer} introduced in~\citet{vaswani2017attention}, as long as it can make $\mathcal{C}_{L}$ shift input $x_{0}\in \mathcal{A}$ to the target domain $\mathcal{B}$ correctly.
Importantly, we next devise an intuitive indicator $\varepsilon$ to reflect how the efficiency of $\mathcal{C}_{L}(x_0 \to x_{L})$:

\vspace{-3pt}
\begin{definition}
(\textbf{The efficiency of Markov chain:} $\mathcal{C}_{L}(x_0 \to x_{l})$). 
The efficiency indicator $\varepsilon$ measures an average cosine similarity between $z_{l}|x_{l-1}$ and $d_{l}|x_{l-1}$:
\begin{equation}
    \varepsilon := \frac{1}{L} \sum _{l=1}^{L} \cos \alpha_{l} = \frac{1}{L} \sum _{l=1}^{L} \langle \vec{z_l}, \vec{d_l} \rangle.
\label{eq: consine similarity}
\end{equation}
where $\vec{v}$ is the normalized tensor of $v$, defined as $\vec{v} := v / \lVert v \rVert _2$. $\angle \alpha_{l}$ is the angle between $z_{l}|x_{l-1}$ and $d_{l}|x_{l-1}$.
\end{definition}

From the definition, a larger $\varepsilon$ indicates a smaller $\angle \alpha_{l}$ between the predicted direction $z_{l}$ and the ideal direction $d_{l}$, mirroring a more efficient Markov chain and vice versa. 
More specifically, if $z_{l}$ always has a positive fraction on $d_{l}$ for all nodes, we call this is a convex chain which formally defined as follows:

\vspace{-3pt}
\begin{definition} \label{def:delta_convex}
(\textbf{$\delta$-convex chain}). 
If $\exists \delta > 0$, $\forall n_{l} \in \mathcal{C}_{L}(x_0 \to x_{L})$, 
$\langle z_{l}, d_{l} \rangle > \delta \lVert d_{l} \rVert _2^2$, then $\mathcal{C}(x_0 \to x_{L})$ is dubbed as a $\delta$-convex chain.
\end{definition}

\vspace{-3pt}
\begin{lemma} \label{lemma:delta_efficient}
If $\mathcal{C}_{L}(x_{0} \to x_{L})$ is $\delta$-convex, $\varepsilon > 0$.
\end{lemma}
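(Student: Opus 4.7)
The plan is to unwind the two definitions and observe that the $\delta$-convex condition directly lower-bounds each summand in $\varepsilon$ by a strictly positive quantity. Since $\varepsilon$ is just the average of these per-node cosines, positivity of every term gives positivity of the average.

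Concretely, I would first note that for each node $n_l$ the $\delta$-convex hypothesis $\langle z_l, d_l\rangle > \delta \lVert d_l\rVert_2^2$ forces both $d_l \neq 0$ and $z_l \neq 0$: if $d_l=0$ the right-hand side is $0$ and the strict inequality $\langle z_l,0\rangle > 0$ fails, and if $z_l=0$ the inner product vanishes while the right-hand side is positive. Hence the normalizations $\vec{z_l} = z_l/\lVert z_l\rVert_2$ and $\vec{d_l} = d_l/\lVert d_l\rVert_2$ are well defined.

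Next I would compute, for each $l$,
\begin{equation*}
\langle \vec{z_l}, \vec{d_l}\rangle \;=\; \frac{\langle z_l, d_l\rangle}{\lVert z_l\rVert_2 \lVert d_l\rVert_2} \;>\; \frac{\delta \lVert d_l\rVert_2^2}{\lVert z_l\rVert_2 \lVert d_l\rVert_2} \;=\; \delta\,\frac{\lVert d_l\rVert_2}{\lVert z_l\rVert_2} \;>\; 0.
\end{equation*}
Summing over $l=1,\dots,L$ and dividing by $L$ yields $\varepsilon = \frac{1}{L}\sum_{l=1}^L \langle \vec{z_l}, \vec{d_l}\rangle > 0$, which is the claim.

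I do not anticipate any genuine obstacle: the lemma is essentially a one-line consequence of the Cauchy–Schwarz-style rescaling between the unnormalized and normalized inner products. The only subtlety worth flagging explicitly in the write-up is the non-degeneracy of $z_l$ and $d_l$ under the strict $\delta$-convex hypothesis, which is what legitimizes the division by $\lVert z_l\rVert_2 \lVert d_l\rVert_2$.
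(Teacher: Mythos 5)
Your proof is correct and takes essentially the same route as the paper's: both rescale the $\delta$-convexity inequality by the positive factor $\lVert z_{l} \rVert_2 \lVert d_{l} \rVert_2$ to conclude that each normalized inner product $\langle \vec{z}_{l}, \vec{d}_{l} \rangle$ is strictly positive, and then average over $l$. Your explicit check that the hypothesis forces $z_{l} \neq 0$ and $d_{l} \neq 0$ (so the normalizations are well defined) is a small refinement the paper leaves implicit, not a different argument.
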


\vspace{-3pt}
\begin{proof}
Since $\mathcal{C}_{L}(x_{0} \to x_{L})$ is $\delta$-convex, then for all $l$:
\begin{equation}
    \langle z_{l}, d_{l} \rangle = \langle \lVert z_{l} \rVert_2 \vec{z}_{l},\ \lVert d_{l} \rVert_2 \vec{d}_{l} \rangle 
    = \lVert z_{l} \rVert_2 \lVert d_{l} \rVert_2 \langle \vec{z}_{l}, \vec{d}_{l} \rangle
    > \delta \lVert d_{l} \rVert_2^2 > 0
\end{equation}
It is noteworthy that $\lVert z_{l} \rVert_2 \lVert d_{l} \rVert_2 > 0$, thus $\forall n_{l}, \langle \vec{z}_{l}, \vec{d}_{l} \rangle > 0$, which yields $\varepsilon = \frac{1}{L} \sum _{l=1}^{L} \langle \vec{z}_l, \vec{d}_l \rangle > 0$.
\end{proof}

Lemma~\ref{lemma:delta_efficient} tells that if $\mathcal{C}_{L}(x_{0}\to x_{L})$ is $\delta$-convex,
it must be an efficient Markov chain, while the reverse is not necessarily true.
As shown in Fig.~\ref{fig:teaser.b}, the plotted chain is an efficient Markov chain, but it is not a $\delta$-convex chain since there does not exist a non-negative $\delta$ making $\angle \alpha_{1}$ satisfy Definition~\ref{def:delta_convex}.

If $\mathcal{C}_{L}(x_{0} \to x_{L})$ is $\delta$-convex, 
then in every node $n_{l}$, a positive fraction of the predicted direction $z_{l}$, i.e., $z_{l}\cos{\angle \alpha_{l}}$, is pointing to the same direction as $d_{l}$.
Given an appropriate step size $\eta$, the input could eventually arrive at the target domain, despite along with a winding path.
Fig.~\ref{fig: efficient inefficient} also illustrates a simple model to verify it. The source input gets closer to target domain $\mathcal{B}$ in every step while moving along a $\delta$-convex Markov chain (visualization in brown).

Formally, we have concluded the following lemma for ensuring convergence.

\vspace{-3pt}
\begin{lemma} \label{lemma:convergence_guarantee}
For each chain node $n_l \in \mathcal{C}_{L}(x_0 \to x_{L})$, consider the forward process $x_{l} = x_{l-1} + z_{l}$, where $\mathbb{E}[z_{l}] = d_{l}, \mathbb{E}[\rVert z_{l} \rVert_F^2] \le Z^2$.
Suppose for all nodes, $x_{l}$ is always inside the $\delta$-convex region with diameter $D$, i.e., $\lVert x_{t} - y \rVert_F \le D$.
Then, for any $a > 0$ and any $L$ such that $L^a\log{L} \ge \frac{D^2\delta^2}{(1+a)Z^2}$, we have $\mathbb{E}[\lVert x_{L} - y \rVert_F^2] \le \frac{(1+a)\log{L Z^2}}{\delta^2 L}$.
\end{lemma}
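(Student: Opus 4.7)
The plan is to recast the forward dynamics $x_l = x_{l-1} + z_l$ as an SGD-style stochastic approximation in which $z_l$ plays the role of a stochastic estimate of the ``ideal gradient'' $d_l$, and the $\delta$-convexity hypothesis plays the role of a strong-convexity constant. Since the target bound $O(\log L / L)$ is exactly the rate known for projected SGD on strongly convex objectives with step-sizes of order $1/l$, the strategy is to adapt that classical argument to the present Markov-chain setting.

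First I would write the one-step squared-distance recursion
\begin{equation*}
\|x_l - y\|_F^2 = \|x_{l-1} - y\|_F^2 + 2\langle z_l,\, x_{l-1} - y\rangle + \|z_l\|_F^2,
\end{equation*}
and take conditional expectation given $x_{l-1}$; the mean condition $\mathbb{E}[z_l \mid x_{l-1}] = d_l$ and the second-moment bound then yield the master inequality
\begin{equation*}
A_l \le A_{l-1} - 2\,\mathbb{E}[\langle d_l,\, y - x_{l-1}\rangle] + Z^2,
\end{equation*}
where $A_l := \mathbb{E}[\|x_l - y\|_F^2]$. Second, I would use Definitions~\ref{def:ideal_direction} and~\ref{def:delta_convex} together with the diameter bound $\|x_{l-1}-y\|_F \le D$ to extract a strong-convexity-type inequality $\langle d_l, y - x_{l-1}\rangle \ge (\delta/c)\,A_{l-1}$, with a constant $c$ depending on $D$, converting the master inequality into an honest contraction. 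Third, I would solve this contraction either by direct telescoping or by introducing the potential $\Phi_l := l\, A_l$; bounding $\mathbb{E}[\Phi_l - \Phi_{l-1}]$ by a term of order $Z^2/\delta^2$ then gives $A_L \le (1+a)\,Z^2 \log L/(\delta^2 L)$ via the harmonic-sum identity $\sum_{l=1}^L 1/l \sim \log L$. The hypothesis $L^a \log L \ge D^2\delta^2/((1+a)Z^2)$ is exactly what is needed to guarantee that the transient contribution of the initial condition (of order $D^2/L$) is absorbed into the stationary $O(\log L /L)$ term and does not worsen the constant $(1+a)$.

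The main obstacle I anticipate lies in the second step: Definition~\ref{def:delta_convex} only controls the inner product $\langle z_l, d_l\rangle$ in units of $\|d_l\|_2^2$ and says nothing directly about the geometric relation between $d_l$ and the error vector $y - x_{l-1}$. Converting $\delta$-convexity into a genuine strong-convexity-type descent condition in the ambient space therefore requires a geometric bridge that is not fully spelled out in the lemma statement --- essentially one has to argue that, within the $\delta$-convex region of diameter $D$, the ideal direction points towards $y$ with magnitude comparable to $\delta\,\|x_{l-1}-y\|_F/D$. Once that link is established (e.g., by combining the ideal-direction property of Definition~\ref{def:ideal_direction} with a convexity assumption on $\ell$ and Lemma~\ref{lemma:interpolation_ideal_direction}), the remaining bookkeeping is the standard Robbins--Monro style recursion analysis and reduces to routine algebra.
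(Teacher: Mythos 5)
Your setup coincides with the paper's: the one-step expansion of $\lVert x_l - y\rVert_F^2$, taking expectations, and bounding the quadratic term by $Z^2$ is exactly how the paper's proof begins (your master inequality is in fact sign-correct where the paper's own derivation has sign typos), and the obstacle you flag is real --- Definition~\ref{def:delta_convex} only controls $\langle z_l, d_l\rangle$ against $\lVert d_l\rVert_2^2$ and says nothing about the angle between $d_l$ and $y - x_{l-1}$. The paper has the same gap: it silently reads ``$x_l$ is inside the $\delta$-convex region'' as the one-point-convexity condition $\langle x_l - y,\, d_{l+1}\rangle \le -\delta\,\mathbb{E}\lVert x_l - y\rVert_F^2$, which is never derived from Definition~\ref{def:delta_convex}. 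Up to this point your proposal and the paper agree.

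The genuine gap is your third step. The potential/telescoping argument ($\Phi_l = l\,A_l$, harmonic sums) is the classical analysis of strongly convex SGD with \emph{decreasing step sizes} $\eta_l \sim 1/l$, and here it has nothing to latch onto: the forward process $x_l = x_{l-1} + z_l$ has no step-size parameter, so noise is injected at constant strength $\mathbb{E}\lVert z_l\rVert_F^2 \le Z^2$ at every node. Your own recursion $A_l \le (1 - 2\delta/c)\,A_{l-1} + Z^2$ is a constant-coefficient affine recursion whose fixed point is the noise floor of order $Z^2/\delta$; if $\Phi_l - \Phi_{l-1}$ is bounded by a constant, telescoping gives $A_L = O(Z^2/\delta)$, a constant, not $O(\log L / L)$. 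No rearrangement fixes this, because with constant per-step noise the iterates genuinely do not converge below the floor. The paper's proof proceeds differently, by a two-case analysis: either $\delta A_l \le Z^2$ at some node, after which the iterate stays below the floor $Z^2/\delta$, or $A_l$ contracts geometrically at every node, giving $A_L \le D^2 (1+\delta)^{-L}$; the hypothesis $L^a \log L \ge D^2\delta^2/((1+a)Z^2)$ is used only to bound this second, geometric (initial-condition) branch by the target quantity --- so your reading of that hypothesis is right, but it plays no role in controlling the noise term. Finally, note that the floor $Z^2/\delta$ equals the claimed bound $(1+a)Z^2\log L/(\delta^2 L)$ only under the unstated relation $\delta = (1+a)\log L / L$, which the paper's proof asserts as an equality without comment (it is the residue of a step-size choice in the SGD argument the paper adapted). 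In other words, the ``routine algebra'' you defer to is precisely the point at which the proposal cannot be completed: the stated $\log L / L$ rate is not reachable by harmonic-sum bookkeeping in a constant-step process, and even the paper's own route needs that hidden identification of $\delta$ to close the argument.
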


The proof of Lemma~\ref{lemma:convergence_guarantee} appears in Appendix~\ref{appendix: proof of convergence guarantee}. 
Notably, Lemma~\ref{lemma:convergence_guarantee} does not imply that $x_L$ equals to $y$. 
It only describes that $x_L$ is sufficiently close to $y$ if $z_{l}$ can predict the correct direction for $x_{l-1}$. For a longer chain (i.e. deeper network with larger $L$), the $x_{L}$ will be restricted closer to target $y$ with a relatively less error.
Taken together, Lemma.~\ref{lemma:convergence_guarantee} guarantees that a Markov chain $\mathcal{C}_{L}$ can shift source input $x_{0} \in \mathcal{A}$ to target domain $\mathcal{B}$ through $L$ nodes in an efficient way if $\mathcal{C}_{L}$ is $\delta$-convex. 
Until now, we can trustingly settle down to optimize $\mathcal{M}_{L}$ from the conception of $\mathcal{C}_{L}$.

\subsection{Markov chain Optimization}

Before exploring the optimization method of a Markov chain, 
it is necessary to have a careful discussion about the efficiency of a Markov chain taking effect in the optimization process.
According to aforesaid Lemma.~\ref{lemma:convergence_guarantee}, if all parameters $\theta_{l}$ are optimized in a way that makes
$\mathcal{C}_{L}(x_{0}\to x_{L})$ always be a $\delta$-convex chain for any input $x_{0}$, 
the convergence speed for optimization appears to be substantially improved, and the final performance as well. 
However, we caution that a $\delta$-convex chain is not guaranteed by existing SGD-based optimizers, such as SGD and Adam. Actually, the existing optimizers prefer towards to an efficient or even an inefficient Markov chain, where the land scope of the loss is more smooth, instead of a $\delta$-convex one. After discovery and practice, we find SGD-based optimizers have the potential to bridge the gap which will be discussed later. The diagrammatic sketch of efficient and inefficient Markov chain is illustrated in Fig.~\ref{fig: efficient inefficient}, which can better help to understand.

Intuitively, while $\varepsilon \to 1$, an efficient Markov chain acts more like a $\delta$-convex chain and can obtain its good properties.
However, a too large $\varepsilon$ is a disaster for existing optimizers, resulting in hard convergence. 
Hence, the main idea to optimize a Markov chain is to train a ``reasonable'' efficient Markov chain, i.e.  let $\lVert \varepsilon \rVert_2^2$ no larger than a given threshold $\rho$. In this way, the objective function can be formulated as:

\begin{equation} \label{eq:objective_function}
    \mathcal{L} = \mathcal{L}_{\mathcal{M}_{L}}(x_{L}, y) + \lambda \lVert \varepsilon \rVert _2^2.
\end{equation}

The first item of Eq.~\ref{eq:objective_function} is the original loss function of the model $\mathcal{M}_{L}$ and the second item is the additional penalization with $\lambda$ to hold $ \lVert \varepsilon \rVert _2^2 \le \rho$.
If set $\lambda = 0$ in Eq.~\ref{eq:objective_function}, the objective function degenerates into a plain mode without any remedy toward forming a more efficient Markov chain.
Instead, if set $\lambda > 0$, the penalization will take effect, suggesting enable to obtain a more efficient Markov model.
The toy example in Fig.~\ref{fig: predict saddle} visualizes the significant effect of this penalization term over the plain net (MLPs) and skip net (MLPs with skip connection).

\begin{figure}[htbp]
\centering 
\vspace{-8pt}
\subfigure[The proposed ideal direction.]{
\begin{minipage}{0.45\textwidth}
\centering 
\includegraphics[width=0.8\linewidth]{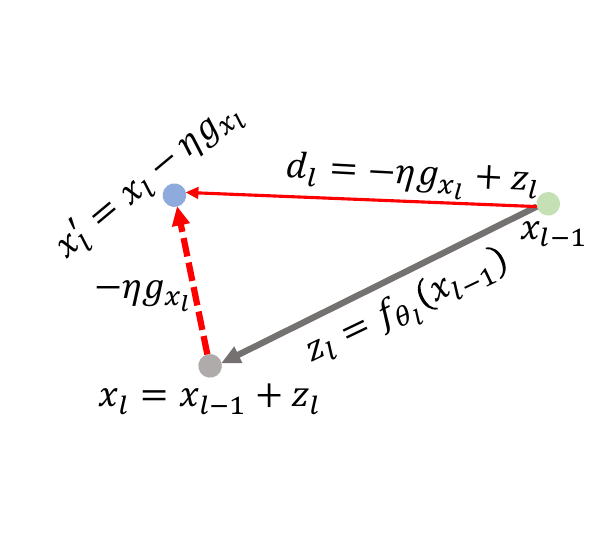}
\label{fig: ideal direction}
\end{minipage}
}
\subfigure[The backward propagation of gradient within a Markov chain node.]{
\begin{minipage}{0.45\textwidth}
\centering
\includegraphics[width=0.8\linewidth]{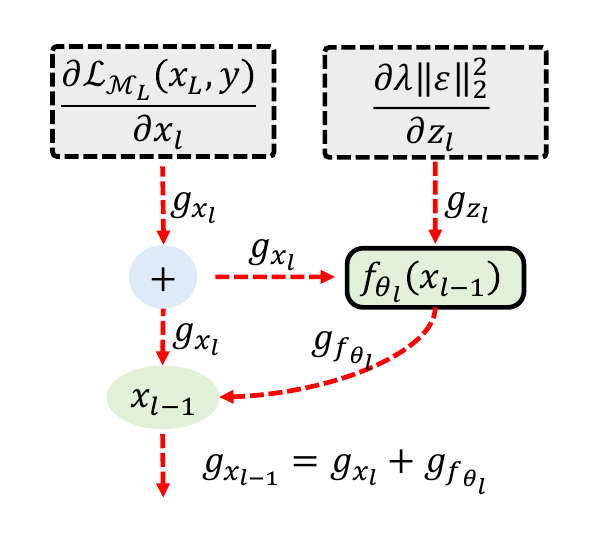}
\label{fig: computing graph}
\end{minipage}
}
\vspace{-8pt}
\caption{
A visualization of ideal direction computed based on $g_{x_{l}}$ in Fig.~\ref{fig: ideal direction}.
It is worth noting that $x_{l}^\prime$ can be viewed that we update $x_{l}$ by $g_{x_{l}}$ with a small learning rate $\eta$.
The backward propagation of gradients within a Markov chain node is plotted in Fig.~\ref{fig: computing graph}.
Compared with the residual-like model, we only add an additional gradient $g_{z_{l}}$ while computing the gradient with respect to $f_{\theta_{l}}$.
} 
\label{fig:optimization} 
\end{figure}

In order to solve Eq.~\ref{eq:objective_function}, the correct ideal direction $d_{l}$ with respect to $x_{l-1}$ is required to figure out. 
As discussed previously, $d_{l}$ is not unique, thus the different definition of $\ell$ leads to a different set of $d_{l}$.
Actually, since the feature space of $x_{l}$ always lies in very high dimension space, 
it would be a great challenge to find a suitable $\ell$ to define an ideal direction $d_{l}$.

As for a chain node $n_{l}$, we reuse the target loss function $\mathcal{L}_{\mathcal{M}_{L}}$ to build a valid $\ell$ function:

\begin{equation}
    \ell(x_{l}, y) := \mathcal{L}_{\mathcal{M}_{L}}(\mathcal{C}_{L}({x_{l} \to x_{L}}), y).
    \label{eq: ell function}
\end{equation}

$x_{l}$ is forward along the rest chain nodes $n_{l+1}, \cdots, n_{L}$, 
and the final output $x_{L}$ is taken to compute the loss between $y$ by $\mathcal{L}_{\mathcal{M}_{L}}$. 
This way, the gradient with respect to $x_{l}$ is:

\begin{equation}
g_{x_{l}} := \frac{\partial \ell(x_{l}, y)}{\partial x_{l}} = \frac{\partial \mathcal{L}_{\mathcal{M}_{L}}(x_{L}, y)}{\partial x_{l}}
\end{equation}

Hence, $g_{x_{l}}$ can be obtained while the backward propagation during the training process and an ideal direction $d_{l}^{\prime}$ based on $g_{x_{l}}$ is $z_{l}-\eta g_{x_{l}}$, where $\eta$ is a small step size, as shown in Fig.~\ref{fig: ideal direction}.

\begin{proof}
Since $x_{l} = x_{l-1} + z_{l}$ and $\ell(x_{l} - \eta g_{x_{l}}, y) < \ell(x_{l}, y)$ always holds, 
set $d_{l}^\prime := z_{l} - \eta g_{x_{l}}$, then
\begin{equation}
    \ell(x_{l-1} + d_{l}^\prime, y) = \ell(x_{l-1} + z_{l} - \eta g_{x_{l}}, y) = \ell(x_{l} - \eta g_{x_{l}}, y) < \ell(x_{l}, y) = \ell(x_{l-1} + z_{l}, y)
\end{equation}
Thus, $d_{l}^\prime = z_{l} - \eta g_{x_{l}}$ is an ideal direction for $x_{l-1}$.
\end{proof}

From Lemma~\ref{lemma:interpolation_ideal_direction}, we known that $d_{l} = \mu d_{l}^\prime + (1-\mu) z_{l}=z_{l}-\eta \mu g_{x_{l}}$ is also an ideal direction, i.e., $z_{l} - \eta \mu g_{x_{l}} \in \mathcal{D}_{\ell, z_{l}|x_{l-1}}$. Assume $\lVert g_{x_{l}} \rVert_2 \ge 1$, then we can set $\mu = \frac{1}{\lVert g_{x_{l}} \rVert_2} \in [0, 1]$, yields $d_{l} = z_{l} - \eta \vec{g}_{x_{l}}$.
Since $\eta$ is small, $\vec{d_{l}}$ can be approximately expressed as $\vec{z}_{l}-\eta \vec{g}_{x_{l}}$, and $\varepsilon$ can be reformulated as:

\begin{align}
    \varepsilon &= \frac{1}{L} \sum _{l=1}^{L} \langle \vec{z}_{l}, \vec{d}_{l} \rangle 
    \approx \frac{1}{L} \sum_{l=1}^{L} \langle \vec{z_{l}}, \vec{z_{l}} - \eta \vec g_{x_{l}} \rangle
    = \frac{1}{L}\sum_{l=1}^{L} \langle \vec{z}_{l}, \vec{z}_{l} \rangle + \langle \vec{z}_{l}, - \eta\vec{g}_{x_{l}} \rangle \nonumber \\
    &=\frac{1}{L} \sum_{l=1}^{L} 1 + \langle \vec{z}_{l}, -\eta\vec{g}_{x_{l}} \rangle
    = 1 + \eta \underbrace{\frac{1}{L} \sum_{l=1}^{L} \langle \vec{z}_{l}, -\vec{g}_{x_{l}} \rangle}_{\varepsilon^\prime}
\end{align}

where $\varepsilon^\prime$ will be used in the following experiments instead of $\varepsilon$ as it indicates the same efficiency properties of the Markov chain but is easier to understand (a larger $\varepsilon^{\prime}$ indicates a more efficient $\mathcal{C}_{L}$).

Then, the gradient to $z_{l}$ can be computed as:
\begin{align}
    g_{z_{l}} &:= \frac{\partial \mathcal{L}_{\mathcal{M}_{L}}(x_{L}, y)}{\partial x_{l}} \frac{\partial x_{l}}{z_{l}} + \frac{\partial \lambda \lVert \epsilon \rVert_2^2}{\partial z_{l}} \nonumber\\
    &= g_{x_{l}}\frac{\partial (x_{l-1} + z_{l})}{\partial z_{l}} + \lambda \frac{\partial \lVert \sum_{l=1}^{L} 1 + \eta \langle \vec{z}_{l}, -\vec{g}_{x_{l}} \rangle \rVert _2^2}{\partial z_{l}}\nonumber\\
    &\approx g_{x_{l}} + \lambda \frac{\partial \sum_{l=1}^{L} \lVert 1 + \eta \langle c z_{l}, -\vec{g}_{x_{l}} \rangle \rVert _2^2}{\partial z_{l}} \label{eq:approx}\\
    &= g_{x_{l}} +  \lambda \frac{\partial \lVert 1 + \eta \langle c z_{l}, -\vec{g}_{x_{l}} \rangle \rVert_2^2}{\partial z_{l}} \nonumber\\
    &= g_{x_{l}} + 2 \lambda (-\eta\vec{g}_{x_{l}})(1 + \langle c z_{l}, -\eta\vec{g}_{x_{l}} \rangle) \nonumber\\
    &= g_{x_{l}} + 2 \lambda (-\eta \vec{g}_{x_{l}}) + 2\lambda c \eta^2 z_{l}\nonumber\\
    &= (1 -  \frac{2\lambda \eta}{\lVert g_{x_{l}} \rVert_2})g_{x_{l}} + 2\lambda c \eta^2 z_{l} \nonumber\\
    &\approx g_{x_{l}} + \tau z_{l}
    \label{eq: gradient}
\end{align}
where $\tau$ is a hyper-parameter and $c = \frac{1}{\lVert z_{l} \rVert_2}$ in Eq.~\ref{eq:approx} can be regarded as a constant value to simplify the gradient derivation process.
The analysis of this estimation can be found in ~Appendix~\ref{appendix:estimation error bound}. 
Despite lots of hyper-parameters introduced for facilitating derivation, e.g., $\lambda, \eta, c, \rho$, we only need to specify a single hyper-parameter $\tau$ in the final formulation, which relieves the heavy burden from a hyper-parameter sweep.
We dubbed this special optimization method as penal connection, which seems to add a simple penalization on the norm of $z_{l}$ as a type of model regularization.
The compute graph has been plotted in Fig.~\ref{fig: computing graph}, and it can be easily implemented based on the PyTorch framework by adding one line of code (see Algo.~\ref{algo:penal_connection}\footnote{Tips: $z_{l}$.register\_hook(lambda $g_{x_{l}}$: $g_{x_{l}}+\tau z_{l}$) is not allowed which could lead to a memory leak.}).

Lastly, due to the various choice of $\ell$, the gradient to $z_{l}$ is correspondingly discrepant.
Further community exploration about more reasonable and effective ways to compute the ideal direction is of great value so continually push forward the performance of the Markov chain.

\begin{algorithm}[t]
	\caption{Pseudo code of penal connection in a PyTorch-like style.}
	\label{algo:penal_connection}
	\begin{algorithmic}
	    \STATE $z_{l} = f_{\theta_{l}}(x_{l-1})$
	    \STATE \textcolor{cyan}{\# Only following line is added to register a hook}
	    \STATE $z_{l}$.register\_hook(lambda $g_{x_{l}}$, $z_{l}$=$z_{l}$.detach().clone(): $g_{x_{l}}+\tau z_{l}$)
	    \STATE $x_{l} = x_{l-1} + z_{l}$
	\end{algorithmic}
\end{algorithm}

\vspace{-5pt}
\section{Experiments}
\vspace{-5pt}

In this section, we conduct intensive experiments to demonstrate the superiority of the Markov chain in the field of \textit{Natural Language Processing} and \textit{Computer Vision}.
Unless otherwise specified, all the residual-like blocks in $\mathcal{M}$ are converted to a Markov chain node $n_{l}$ in $\mathcal{C}$.

Specifically, in transformer block~\citep{vaswani2017attention} which consists of a multi-head self-attention module (MSA) and a feed-forward network (FFN), we convert it as two chain nodes as both MSA and FFN employ skip connection respectively.

For simplicity, we set $\tau$ the same for all nodes in a Markov chain.
All experiments are carried out by publicly available projects implemented by PyTorch~\citep{paszke2017automatic} on a device equipped with 8 NVIDIA-A100 GPUs. 
More experimental details can be found in the supplementary materials.

\vspace{-5pt}
\subsection{Multi-modal translation}
\vspace{-3pt}

Multi-modal translation requires a model to be capable of translating the text information from the source language $\mathcal{A}$ to the target language $\mathcal{B}$.
How the efficient/inefficient Markov chain performs during the training process can be clearly observed in this experiment.

\begin{figure}[htbp]
\centering 
\vspace{-5pt}
\subfigure[WMT16 English to Germany Translation (Setting Q)]{
\begin{minipage}{0.9\textwidth}
\centering 
\includegraphics[width=1.0\linewidth]{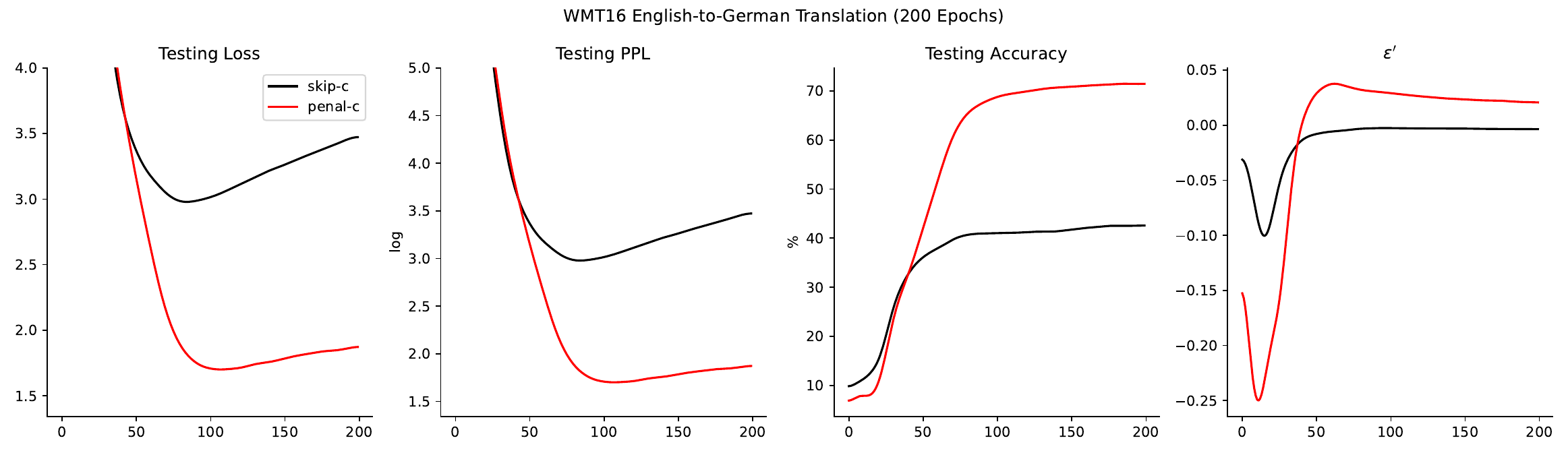}
\end{minipage}
}\\
\subfigure[WMT16 English to Germany Translation (Setting S)]{
\begin{minipage}{0.9\textwidth}
\centering
\includegraphics[width=1.0\linewidth]{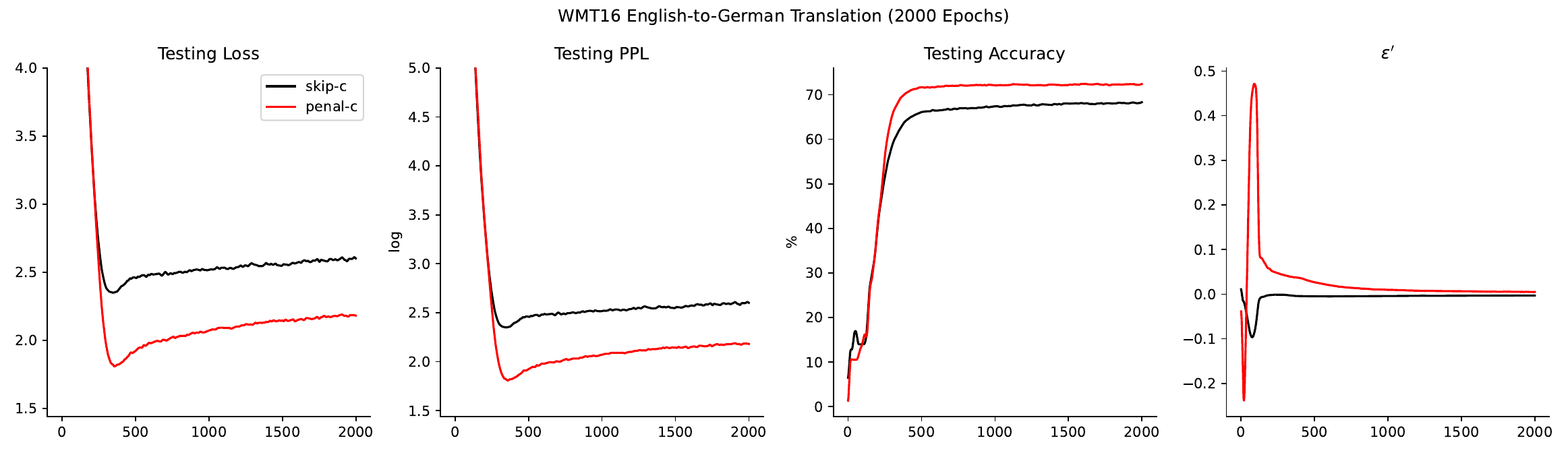}
\end{minipage}
}
\vspace{-8pt}
\caption{
The testing curve across different epochs on WMT16 English to Germany translation tasks (Germany to translation curve can be found in supplementary materials).
} 
\vspace{-5pt}
\label{fig:wmt} 
\end{figure}

\noindent\textbf{Dataset.} 
WMT16~\citep{bojar2016results} is a widely used translation dataset based on the data from \textit{statmt.org}, which contains various interesting translation tasks on specified domains.
Here, we are focusing on the news translation tasks between English and Germany.
The text for all the test sets is drawn from news articles.

\vspace{-3pt}
\noindent\textbf{Implementation details.}
We adopt the most widely used benchmark \textit{Transformer}~\citep{vaswani2017attention} as our strong baseline.
The embedding size $d_{emb}$ is 512, and the source and target embedding layers are shared.
We empirically set $\tau$ to $3 \times 10^{-4}$, which generalizes well to all translation tasks.
Here, we opt for the mutual translation tasks between English and Germany for validation.
All the models are trained using Adam optimizer with $\beta_1=0.9, \beta_2=0.98$. We use a batch size of 1024 and weight decay of 0.05 and other recipes for training are identical to the original implementation~\citep{vaswani2017attention}. We set up two regular training settings, Q and S separately (see Table~\ref{table:wmt}).

\begin{table}[]
\centering
\vspace{-20pt}
\resizebox{0.95\linewidth}{!}{
\begin{tabular}{l|ll|ccl|ccl}
\multicolumn{1}{c|}{} & \multicolumn{2}{c|}{Settings} & \multicolumn{3}{c|}{PPL}                 & \multicolumn{3}{c}{Accuracy(\%)}       \\ \cline{2-9} 
\multicolumn{1}{c|}{Task} &
  \multicolumn{1}{c}{WarnUp} &
  \multicolumn{1}{c|}{Total} &
  Trans. &
  Markov. &
  \multicolumn{1}{c|}{Improv.} &
  Trans. &
  Markov. &
  \multicolumn{1}{c}{Improv.} \\ \hline
WMT-16 DE/EN (Q)      & 35            & 200           & 33.3 & 7.9  & $\downarrow$ \textbf{25.4} & 36.4 & 70.0 & $\uparrow$ \textbf{33.6} \\
WMT-16 DE/EN (S)      & 350           & 2000          & 17.3 & 11.3 & $\downarrow$ \textbf{6.0}  & 66.4 & 70.4 & $\uparrow$ \textbf{4.0}  \\ \hline
WMT-16 EN/DE (Q)      & 35            & 200           & 31.8 & 6.3  & $\downarrow$ \textbf{25.5} & 43.1 & 71.8 & $\uparrow$ \textbf{28.7} \\
WMT-16 EN/DE (S)      & 350           & 2000          & 13.4 & 8.5  & $\downarrow$ \textbf{4.9}  & 68.7 & 72.8 & $\uparrow$ \textbf{4.1} 
\end{tabular}
}
\vspace{-5pt}
\caption{Testing PPL (the lower the better) and accuracy (the higher the better) of Transformer~(Trans.) and counterpart Markov chain~(Markov.). }
\vspace{-5pt}
\label{table:wmt}
\end{table}

\vspace{-3pt}
\noindent\textbf{Result analysis.}
From Fig.~\ref{fig:wmt} and Tab.~\ref{table:wmt}, a few patterns can be observed. One is that the Markov chain with penal connection converges faster than the residual counterpart. Surprisingly, the Markov chain training with 200 epochs can outperform the baseline model well-training with 2000 epochs ~(10$\times$ training schedule), demonstrating the good merits of the proposed method.
It is worth noting that the transformer model does not saturate without adequate training schedule length. 
The $\varepsilon^{\prime}$ value plotted in Fig.~\ref{fig:wmt} may illustrate this phenomenon.
From the curve, we find that the transformer model is more likely to be an inefficient Markov chain whose $\varepsilon^{\prime}$ is always negative.
Aided by the proposed penal connection, it becomes an efficient Markov chain whose $\varepsilon^{\prime}$ is mostly positive.
Furthermore, we observed that at the early training stage, $\mathcal{C}_{L}$ is an inefficient chain, which is even worse than the baseline. After several epochs, it conversely turns to become a $\delta$-convex chain (with a large $\varepsilon^{\prime}$). 
After that, $\mathcal{C}_{L}$ decays to a less efficient Markov chain with a small positive $\varepsilon^{\prime}$.
This phenomenon also confirms our analysis in the method section: Moving along with a $\delta$-convex chain can help converge to the target in the most efficient way. However, it may not be the best way under the existing optimizer. 
In order to achieve higher performance, the model will be pushed to be a less efficient chain.
As for the baseline model, we find that it always acts as an inefficient Markov chain which more easily gets trapped in local minima.

\vspace{-5pt}
\subsection{Image classification}
\vspace{-3pt}

Different from translation tasks, the gradient for image classification is quite sparse 
and the redundant parameters often guarantee the optimizer towards to an effective Markov chain, even to a $\delta$-convex chain, suggesting that even without the penal connection, the model can also be optimized efficiently.
Despite that, we still observe a slight $\tau$~(smaller than $10^{-7}$) can further improve the final accuracy by a non-trivial margin.
This benefit is at least partly due to the model regularization effect of $\tau$ which alleviates over-fitting.

\noindent\textbf{Dataset.} 
We conduct a series of experiments on the task of image classification using the ImageNet-1K dataset~\citep{deng2009imagenet}, which consists of 1.28 million training images across 1000 classes and 50k images for validation. 

\vspace{-3pt}
\noindent\textbf{Implementation details.}
There are many variants of residual-like models used in the image classification task. 
We conduct experiments on three representative types of models, i.e. ResNet~\citep{he2016deep},  ViT~\citep{dosovitskiy2020image}~(and its variants, e.g., DeIT~\citep{pmlr-v139-touvron21a}, Swin~\citep{liu2021swin}).
During experiments, we find that all these models can learn an efficient Markov chain or even $\delta$-convex chain, 
so that only a very small $\tau$~(less than $10^{-6}$) is applied.
All the models are trained for a 10-epoch linear warmup  and a cosine decaying schedule afterward for 290 epochs.
\textit{More details can be found in the supplementary materials.}

\begin{table}[t]
\centering
\resizebox{0.95\linewidth}{!}{
\begin{tabular}{l|c|cccc|c}
Model          & Baseline & $3\times 10 ^ {-9}$ & $3\times 10 ^ {-8}$ & $3\times 10 ^ {-7}$ & $3\times 10 ^ {-6}$ & Improv. \\ \hline
ResNet50       & 79.2     & 79.0                & 79.4                & 79.2                    &  79.3                   &  \textbf{+0.2}       \\
ViT-S/16-224   & 77.8     & 78.0                & 78.1                &    78.0                 &  77.8                   &  \textbf{+0.3}       \\
DeIT-S/16-224  & 79.9         & 79.9                    & 80.1                    &  80.0                   &  79.8                   &   \textbf{+0.2}      \\
Swin-S/4-7-224 & 83.3     & 83.2                & 83.5                & 83.3                &   83.1              & \textbf{+0.2}       
\end{tabular}
}
\vspace{-5pt}
\caption{The top-1 accuracy on ImageNet-1k on different architectures with different $\tau$.}
\vspace{-17pt}
\label{tab:imagenet}
\end{table}

\vspace{-3pt}
\noindent\textbf{Result analysis.}
As listed in Tab.~\ref{tab:imagenet}, despite that the baseline models have already achieved a saturated accuracy, a suitable $\tau$ with penal connection routine can further push forward this performance by a non-trivial margin. Across our experiments, a large $\tau$ usually does hurt the mode performance. 
This observation reflects another effect of penal connection, that is $\tau$ not only engages the model to learn an efficient Markov chain, but it also limits the Markov chain to be too efficient, keeping the Markov chain away from a $\delta$-convex chain.
We think it is also reasonable because a $\delta$-convex chain may lead to hard optimization, 
and a slight $\tau$ can take effect in this situation.
In other words, the penal connection can also be viewed as a model regularization that can improve the final performance via alleviating over-fitting from a new standpoint.

\vspace{-5pt}
\subsection{Model degradation}
\vspace{-3pt}

One of the main reasons that residual-like models become widely used across various tasks is their capacity to counter the problem of model degradation in deeper networks.
Model degradation refers to the phenomenon that with the increasing depth of model $\mathcal{M}_{L}$, the performance will no longer be improved and even worse.
We find that by converting a residual-like model $\mathcal{M}_{L}$ to a Markov chain $\mathcal{C}_{L}$, the model degradation problem can be solved better.
With the aid of the penal connection routine, a deep model can arrive at a higher performance than the original counterpart.

\noindent\textbf{Dataset.} 
CIFAR10~\footnote{\url{https://www.cs.toronto.edu/~kriz/cifar.html}} dataset consists of 60k 32x32 color images in 10 classes. There are 50k training images and 10k test images.
The CIFAR100 dataset is identical to CIFAR-10, except the number of classes is 100. 

\vspace{-3pt}
\noindent\textbf{Implementation details.}
We take ResNet~\cite{he2016deep} for comparison.
In order to investigate the performance trend of models over different depths, 
we build seven models with different depths, i.e., $L\in \{18, 20, 32, 34, 44, 50, 56\}$.
We used a momentum SGD optimizer with a batch size of 128 and a weight decay of 0.0001 for CIFAR10 and 0.0005 for CIFAR100. The $\tau$ is $3 \times 10^{-9}$ for all experiments. 
We trained all the models for 200 epochs from an initial learning rate of 0.1. The learning rate decayed by a factor of 10 at epochs 100, and 150 for CIFAR10, and at epochs 60, 120, and 160 for CIFAR100.

\begin{table}[t]
\centering
\begin{tabular}{lccccccccc}
\multicolumn{1}{l|}{\# layers} & 18            & 20            & 32            & 34            & 44            & 50            & 56            & 101           & 152           \\ \hline
\multicolumn{10}{c}{CIFAR10}                                                                                                                                                   \\ \hline
\multicolumn{1}{l|}{ResNet}    & -             & 92.1          & 92.6          & -             & 93.2          & -             & 92.2          & -             & -             \\
\multicolumn{1}{l|}{Markov.}   & -             & 92.3          & 93.2          & -             & 93.4          & -             & 93.4          & -             & -             \\ \hline
\multicolumn{10}{c}{CIFAR100}                                                                                                                                                  \\ \hline
\multicolumn{1}{l|}{ResNet}    & 76.3          & -             & -             & 77.6          & -             & 78.5          & -             & -             & -             \\
\multicolumn{1}{l|}{Markov.}   & 76.7          & -             & -             & 78.0          & -             & 79.1          & -             & -             & -             \\ \hline
\multicolumn{10}{c}{ImageNet1k}                                                                                                                                                \\ \hline
\multicolumn{1}{l|}{ResNet}    & -             & -             & -             & -             & -             & -             & -             & 80.7          & 81.1          \\
\multicolumn{1}{l|}{Markov.}   & -             & -             & -             & -             & -             & -             & -             & 81.1          & 81.4          \\ \hline
\multicolumn{1}{l|}{Improv.}   & \textbf{+0.4} & \textbf{+0.2} & \textbf{+0.6} & \textbf{+0.4} & \textbf{+0.2} & \textbf{+0.6} & \textbf{+1.2} & \textbf{+0.4} & \textbf{+0.3}
\end{tabular}
\caption{The accuracy of ResNet over different depths on CIFAR10, CIFAR100 and ImageNet1K. }
\label{table:cifar}
\vspace{-15pt}
\end{table}

\vspace{-3pt}
\noindent\textbf{Result analysis.}
The results are listed in Tab.~\ref{table:cifar}. All the Markov chain models significantly advance baseline residual models.
In particular, as the model goes deeper, the performance of baseline models saturates first and decay afterward.
On the contrary, the Markov chain with penal connection consistently achieves stable gains.
This evidence confirms that our proposed method can further alleviate the model degradation problem, motivating future scaling efforts in depth.

\vspace{-5pt}
\section{Conclusions}
\vspace{-5pt}
In this work, we introduce the conception of a learnable Markov chain for the residual-like model, and propose a simple routine of penal connection to boost the model performance and alleviate the model degradation in deep depth as well. Adequate theoretical analysis and comprehensive experiments on the different types of architectures across a spectrum of tasks jointly demonstrate the rationality and effectiveness of the learnable Markov chain.
While these initial results are encouraging, many challenges remain. For example, a better way to compute the ideal direction of Markov chain would likely lead to further improved performance. We expect more research would pay more attention to this new perspective which will inspire future work.

\bibliographystyle{named}
\bibliography{iclr2023_submission}

\appendix
\section{Related Works}

\paragraph{The model with skip connection.}
In recent years, a large number of skip connection mechanisms have been used for neural network design~\citep{liu2020rethinking}.
Highway network~\citep{zilly2017recurrent} built a skip connection with a learnable gating mechanism from the input to the output. 
~\citet{he2016deep} proposed the identity skip connections in ResNet, which have already become an indispensable building module in modern architecture design. This type of model with skip connections can be summarized as \textit{a residual-like} model.
In the field of natural language processing, \textit{Transformer}~\citep{vaswani2017attention} made use of skip connection to build the multi-head self-attention module and the feed-forward network. 
Recently, Transformer has attracted extensive attention in computer vision and various variants~\citep{han2022survey} have been proposed which achieve superior performance against traditional CNNs.
In this work, we aim to improve the performance of most existing residual-like models from the perspective of a learnable Markov chain.

\paragraph{Markov chain.}
A Markov chain or Markov process is a stochastic model describing a sequence of possible events in which the probability of each event depends only on the state attained in the previous event~\citep{gagniuc2017markov}. 
Informally, it can be thought of as ``What happens next depends only on the state of affairs now.''.
Markov chains have many applications for statistical models~\citep{karlin2014first}, such as studying cruise control systems in motor vehicles, queues or lines of customers arriving at an airport, currency exchange rates, and animal population dynamics~\citep{meyn2009markov}.
Markov processes are the basis for general stochastic simulation methods known as Markov chain Monte Carlo, which are used for simulating sampling from complex probability distributions, and have found application in Bayesian statistics, thermodynamics, statistical mechanics, physics, chemistry, economics, finance, signal processing, information theory and speech processing~\citep{gamerman2006markov}.
Recently, the notation of the Markov chain also plays an important role in reinforcement learning~\citep{otterlo2012reinforcement}, image generation~\citep{ho2020denoising} and other deep learning-related fields~\citep{mardt2018vampnets}. \citet{shwartz2017opening} regards the feed-forward process of a neural network as a Markov chain and explains the behavior of the neural network in the optimization process from the perspective of Shannon Information Theory.

\section{Proof of Lemma.~\ref{lemma:convergence_guarantee}}
\label{appendix: proof of convergence guarantee}
\begin{proof}
According to the forward pass, we have
\begin{align}
    \mathbb{E}[\lVert x_{l+1}-y \rVert_F^2] &= \mathbb{E}[\lVert x_{l} - y + z_{l+1}\rVert_F^2] \\
    &= \mathbb{E}[\lVert x_{l} - y\rVert_F^2] + 2\langle x_{l}-y, d_{l+1} \rangle + \lVert z_{l+1} \rVert _F^2 \nonumber \\
    &\le \mathbb{E}[\lVert x_{l} - y \rVert_F^2] + 2\langle x_{l} -y, d_{l+1} \rangle + Z^2 \nonumber \\
    &\le (1+2\delta)\mathbb{E}[\lVert x_{l} - y \rVert _F^2] + Z^2.
\end{align}
Now if $\delta \mathbb{E}[\lVert x_{l} - y \rVert_F^2] \ge Z^2$,\footnote{In order to simplify the derivation process, we use the F-norm function to measure the distance between each state $x_{l}$ and the target $y$ without losing the generality.} we know the $\mathbb{E}[\lVert x_{l} - y \rVert_F^2]$ will decrease by a factor of $(1+\delta)$ for every chain node. Otherwise, although it could increase, we know
\begin{align}
\mathbb{E}[\lVert x_{l} - y \rVert_F^2] \le \frac{Z^2}{\delta}.
\end{align}
We know after $L$ nodes, either $\mathbb{E}[\lVert x_{L} - y \rVert_F^2]$ is already smaller than $\frac{Z^2}{\delta} = \frac{(1+a)\log{L Z^2}}{\delta^2 L}$, or it is decreasing by factor of $(1+\delta)$ for every node, which means
\begin{align}
    \mathbb{E}[\lVert x_{L} - y \rVert_F^2] &\le \mathbb{E}[\lVert x_{0} - y \rVert_F^2](1+\delta)^L \nonumber \\
    &\le D^2e^{\delta L} = D^2e^{(1+a)\log{L}}=\frac{D^2 L^a}{L} \nonumber \\
    &\le \frac{(1+a)\log{L Z^2}}{\delta ^2 L}.
\end{align}
The last inequality holds since
\begin{align}
L^a\log{L} \ge \frac{D^2\delta^2}{(1+a)Z^2}
\end{align}
Thus, $\mathbb{E}[\lVert x_{L} - y \rVert_{F}^2]$ will be smaller than $\frac{(1+a)\log{L Z^2}}{\delta^2 L}$.
\end{proof}


\section{Analysis of estimation $g_{z_l}$}

\label{appendix:estimation error bound}

Here, we give a detailed analysis of the estimation error of $g_{z_{l}}$ in Eq.~\ref{eq: gradient}.

Firstly, we strictly follow the chain rule to derive the accurate formula of $g_{z_{l}}$:

\begin{align}
    g_{z_{l}} &:= \frac{\partial \mathcal{L}_{\mathcal{M}_{L}}(x_{L}, y)}{\partial x_{l}} \frac{\partial x_{l}}{z_{l}} + \frac{\partial \lambda \lVert \epsilon \rVert_2^2}{\partial z_{l}} \nonumber\\
    &= g_{x_{l}}\frac{\partial (x_{l-1} + z_{l})}{\partial z_{l}} + \lambda \frac{\partial \lVert \sum_{l=1}^{L} 1 + \eta \langle \vec{z}_{l}, -\vec{g}_{x_{l}} \rangle \rVert _2^2}{\partial z_{l}}\nonumber\\
    &= g_{x_l} + \lambda \frac{\partial( \lVert 1 + \eta \langle \vec{z}_l, -\vec{g}_{x_l} \rangle \rVert_2^2)}{\partial z_{l}}\nonumber \\
    &= g_{x_l} + \lambda \frac{\partial( \lVert 1 + \eta \langle \frac{z_{l}}{\lVert z_{l} \rVert _2}, -\vec{g}_{x_l} \rangle \rVert_2^2)}{\partial z_{l}}\nonumber \\
    &= g_{x_l} + 2 \lambda ( 1 +  \langle \frac{z_{l}}{\lVert z_{l} \rVert _2}, -\eta \vec{g}_{x_l} \rangle) \underbrace{\frac{\partial \langle \frac{z_{l}}{\lVert z_{l} \rVert_2}, -\eta \vec{g}_{x_l} \rangle}{\partial z_{l}}}_{:= A}
    \label{eq: accurate gradient}
\end{align}

where 

\begin{align}
    A &:= \frac{\partial \langle \frac{z_{l}}{\lVert z_{l} \rVert_2}, -\eta \vec{g}_{x_l} \rangle}{\partial z_{l}} \nonumber\\
      &= \frac{\partial \frac{\langle z_{l}, -\eta \vec{g}_{x+l}  \rangle}{\lVert z_{l} \rVert_2 }}{\partial z_{l}}\nonumber\\
      &= \frac{(-\eta\vec{g}_{x_l})\lVert z_l \rVert_2 + \lVert z_{l} \rVert_2^{-1} z_l \langle z_l, -\eta \vec{g}_{x_l} \rangle}{\lVert z_l \rVert _2^2}\nonumber\\
      &= - c \eta \vec{g}_{x_l} + c^3 \langle z_l, - \eta \vec{g}_{x_l} \rangle z_{l}.
      \label{eq: A}
\end{align}
$c := \frac{1}{\lVert z_{l} \rVert_2}$ is defined in Eq.~\ref{eq:approx}.
Then we substitute Eq.~\ref{eq: A} into Eq.~\ref{eq: accurate gradient}:

\begin{align}
    g_{z_{l}} &:= g_{x_l} + 2 \lambda ( 1 +  \langle \frac{z_{l}}{\lVert z_{l} \rVert _2}, -\eta \vec{g}_{x_l} \rangle) (- c \eta \vec{g}_{x_l} + c^3 \langle z_l, - \eta \vec{g}_{x_l} \rangle z_{l})\nonumber\\
    &= g_{x_l} + 2 \lambda ( 1 +  \langle c z_{l}, -\eta \vec{g}_{x_l} \rangle) (- c \eta \vec{g}_{x_l} + c^3 \langle z_l, - \eta \vec{g}_{x_l} \rangle z_{l})\nonumber\\
    &= g_{x_l} + 2 \lambda (1 - c\eta t)(-c\eta \vec{g}_{x_l} - c^3 \eta t z_{l})
    \label{eq: g}
\end{align}
where $t := \langle z_{l}, \vec{g}_{x_l} \rangle$. 
Eq.~\ref{eq: g} can be further reformulated as:
\begin{align}
    g_{z_{l}} &:= g_{x_l} + 2 \lambda (1 - c\eta t)(-c\eta \vec{g}_{x_l} - c^3 \eta t z_{l}) \nonumber\\
    &= (1 - 2 c \eta \lambda \frac{1 - c\eta t}{\lVert g_{x_l}\rVert_2}) g_{x_l} + 2 \eta \lambda c^3 t (c\eta t - 1) z_{l}
    \label{eq: grad}
\end{align}

Hence, the estimated error term $\epsilon$ can be calculated by subtracting Eq.~\ref{eq: gradient} from Eq.~\ref{eq: grad}, which is:

\begin{align}
    \epsilon := - 2 c \eta \lambda \frac{1 - c\eta t}{\lVert g_{x_l}\rVert_2} g_{x_l} + (2 \eta \lambda c^3 t (c\eta t - 1) - \tau) z_{l}.
\end{align}

Since $\lambda$ and $\eta$ are very close to zero, the influence of the first term in right can be ignored, so the second term mainly contributes to the estimation error.
When we choose a suitable hyper-parameter $\tau$ that makes $\tau = 2 \eta \lambda c^3 t (c\eta t - 1)$ hold, the estimation error $\epsilon$ could be abysmally close to zero. 
Empirically, the value of $\tau$ varies significantly across different tasks.

\section{The quality of the proposed ideal direction}
There are many ways to define an ideal direction. As shown in Fig.~\ref{fig: diff}, our proposed ideal direction calculation approach will satisfy the definition in most situations.
\begin{figure}
    \centering
    \includegraphics[width=0.8\linewidth]{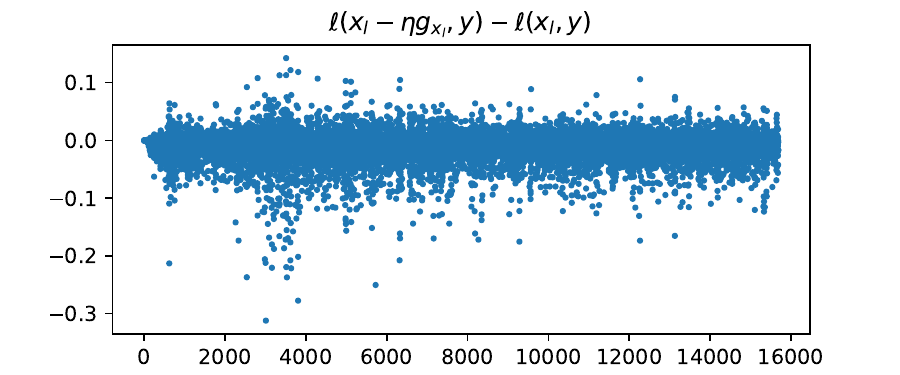}
    \caption{The difference between $\ell(x_{l}-\eta g_{x_{l}},y) - \ell(x_{l},y)$ during training in the toy model. Under more than $70.0\%$ of situations, $\ell(x_{l}-\eta g_{x_{l}},y) < \ell(x_{l},y$ strictly holds.}
    \label{fig: diff}
\end{figure}

\section{Advantage of ResNet over VGG under Markov chain’s guide}

\begin{figure}
    \centering
    \includegraphics[width=0.5\linewidth]{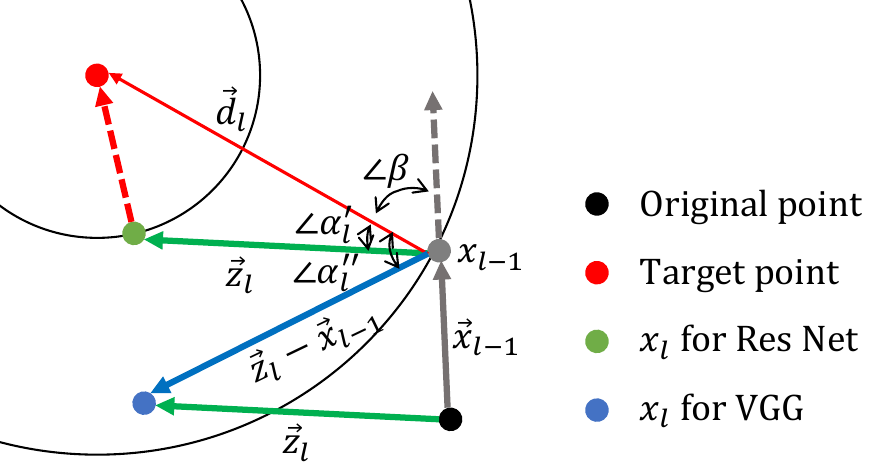}
    \caption{Advantage of ResNet over VGG under Markov chain’s guide. 
    As long as $\mathcal{C}_{L}(x_{0} \to x_{L})$ is a $\delta$-convex chain, the chain formed by ResNet is more efficient than the one formed by VGG.
    }
    \label{fig: resnet-vgg}
\end{figure}

\begin{lemma} \label{lemma:advantage of resnet over vgg}
If $\mathcal{C}_{L}(x_0 \to x_L)$ is $\delta$-convex, the chain $\mathcal{C}^\prime$ formed by ResNet is more efficient than the chain $\mathcal{C}^{\prime\prime}$ formed by VGG, formally:
\begin{equation} \label{eq: resnet over vgg}
    \epsilon^\prime \ge \epsilon^{\prime\prime}
\end{equation}
where $\epsilon^\prime$ is the efficiency of ResNet chain and $\epsilon^{\prime\prime}$ is the efficiency of VGG chain.
\end{lemma}

\begin{proof} \label{proif:advantage of resnet over vgg}
Without losing generality, we will discuss only one node here.
As shown in Fig.~\ref{fig: resnet-vgg}, since $\mathcal{C}$ is $\delta$-convex, $x_{l}$ will always get closer to the target compared with $x_{l-1}$ and the original point. 
Hence, 
\begin{equation}\label{eq: x, d ge zero}
    \langle \vec{x}_{l-1}, \vec{d}_l \rangle \ge 0
\end{equation}
holds.
Suppose the parameters $\theta_{l}$ are the same for ResNet and VGG, then:
\begin{align}
    x_{l}^\prime &= z_{l} + x_{l-1} \\
    x_{l}^{\prime\prime} &= z_{l},
\end{align}
where $z_{l} := f_{\theta_{l}}(x_{l01})$. $x_{l}^\prime$ and $x_{l}^{\prime\prime}$ are the next chain node for ResNet and VGG respectively.

As for chain node $x_{l-1}$, the estimated direction for ResNet is $\vec{z}_{l}$ and the estimated direction for VGG is $\vec{z}_{l} - \vec{x}_{l-1}$.
Therefore, 
\begin{align}
    \epsilon^\prime &= \langle z_l, d_l \rangle \\
    \epsilon^{\prime\prime} &= \langle z_{l} - x_{l-1}, d_l \rangle.
\end{align}

According to Eq.~\ref{eq: x, d ge zero}, we have:
\begin{align}
    & \langle x_{l-1}, d_l \rangle \ge 0 \nonumber \\
    & \Rightarrow \langle z_l, d_l \rangle - \langle z_{l}, d_l \rangle + \langle x_{l-1}, d_l \rangle \ge 0 \nonumber\\
    & \Rightarrow \langle z_l, d_l \rangle - \langle z_{l} - x_{l-1}, d_l \rangle \ge 0 \nonumber\\
    & \Rightarrow \langle z_l, d_l \rangle \ge \langle z_{l} - x_{l-1}, d_l \rangle \nonumber\\
    & \Rightarrow \epsilon^\prime \ge \epsilon^{\prime\prime}
\end{align}
Proofed.

\end{proof}

Lemma.~\ref{lemma:advantage of resnet over vgg} indicates that ResNet can form a more efficient Markov chain compared with VGG, and leads to better performance.

\section{More discussion on model degradation}

As shown in Fig.~\ref{fig: random init}, the randomly initialized Markov chain is prone to move in zigzags and even turn back as the chain goes longer (i.e. the model going deeper), which hinders the model from fitting target distribution efficiently, probably resulting in worse performance.
When we apply penal connection to enforce the network to be an efficient Markov chain, the turn-back chain nodes do not exist so each node takes at least a non-negative effect whatever how long the chain is. As a result, it could alleviate the model degradation problem.

\begin{figure}[h]
\centering 
\subfigure[Folding Markov chain.]{
\begin{minipage}{0.45\textwidth}
\centering 
\includegraphics[width=0.88\linewidth]{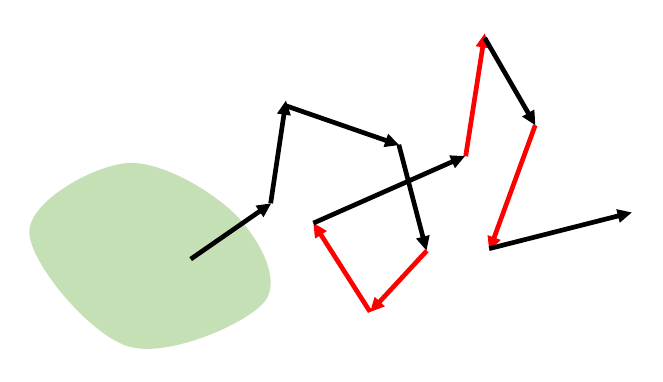}
\label{fig:random init a}
\end{minipage}
}
\subfigure[Unfolding Markov chain.]{
\begin{minipage}{0.45\textwidth}
\centering
\includegraphics[width=0.88\linewidth]{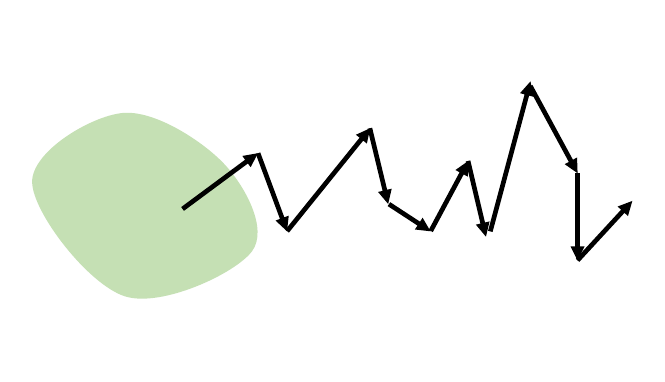}
\label{fig:random init b}
\end{minipage}
}
    \caption{
    Structural comparison between a folding Markov chain (a) and an unfolding Markov chain (b).}
    \label{fig: random init}
\end{figure}

\clearpage
\section*{Experiments details}
\begin{figure}[htbp]
\centering 
\subfigure[Plain net]{
\begin{minipage}{0.3\textwidth}
\centering 
\includegraphics[height=4cm]{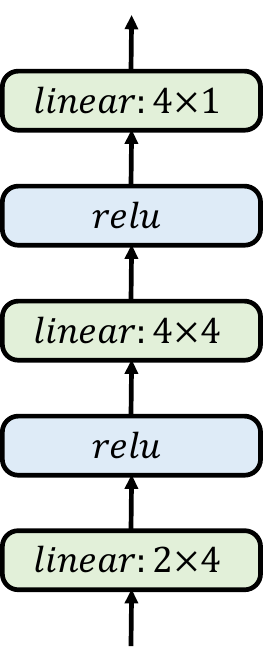}
\end{minipage}
}
\subfigure[Skip net]{
\begin{minipage}{0.3\textwidth}
\centering
\includegraphics[height=4cm]{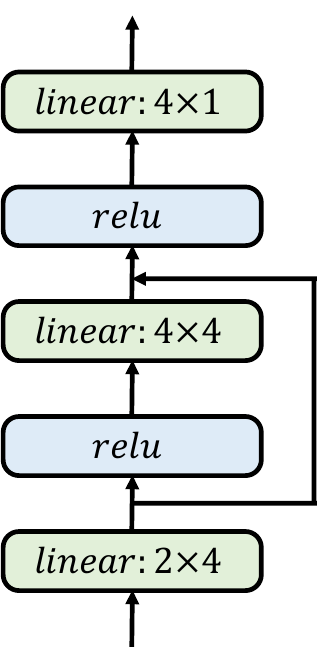}
\end{minipage}
}
\subfigure[Markov net]{
\begin{minipage}{0.3\textwidth}
\centering
\includegraphics[height=4cm]{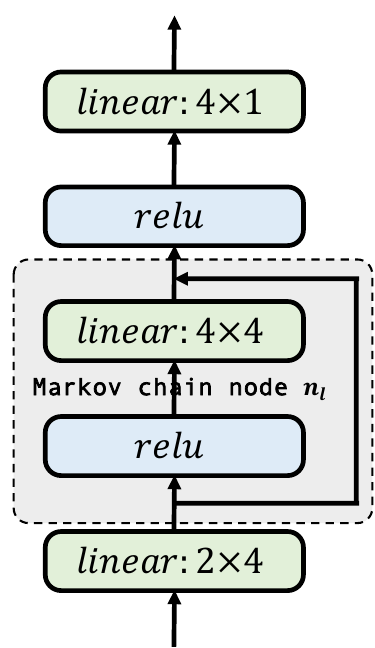}
\end{minipage}
}
\caption{Structure of plain net, skip net, and Markov net.} 
\label{fig:toy model} 
\end{figure}

\subsection{Toy model}
In order to better demonstrate the difference between plain net, skip net, and Markov net, we conduct a simple task in which maps $(x, y)$ coordinate lies in the range of $[-5, 5]$ to a target distribution $x^2 - y^2$.
As shown in Fig.~\ref{fig:toy model}, plain net, skip net, and Markov net have the same structure except that a skip connection is added to the skip net, and a penal connection is added to the Markov net. 
The number of total learnable parameters is $28$ and all networks are initialized with the same value.
We adopt an SGD~\cite{cherry1998sgd} optimizer with a constant learning rate of $0.001$, and momentum is $0.98$, and train the model with a batch size of $32$ for $10000$ steps in total.
As for the Markov net, we set $\tau$ to $0.0001$.

\subsection{Multi-modal translation}

We adopt the most widely used Transformer~\cite{vaswani2017attention} as our baseline.
The model consists of 6 transformer blocks for the encoder and also 6 transformer blocks for the decoder. 
As mentioned above, each transformer block will be converted to two Markov chain nodes. 
Hence, the total length of the converted Markov chain is 12 for the encoder and 12 for the decoder, respectively.
We follow the implementation in public project~\footnote{\url{https://github.com/jadore801120/attention-is-all-you-need-pytorch}}.
The embedding size $d_{emb}$ is 512, and the source and target embedding layers are shared.
In the meantime, we weight for target embedding layers is also shared with the last dense layer.
Empirically, we set $\tau$ to $3 \times 10^{-4}$, which generalizes well to all translation tasks. Here, we opt for the mutual translation tasks between English and Germany. The full curve during the training process has been plotted in Fig.~\ref{fig:full wmt}.
\begin{figure}[htbp]
\centering 
\subfigure[WMT16 English to Germany Translation (Setting Q)]{
\begin{minipage}{\textwidth}
\centering 
\includegraphics[width=1.0\linewidth]{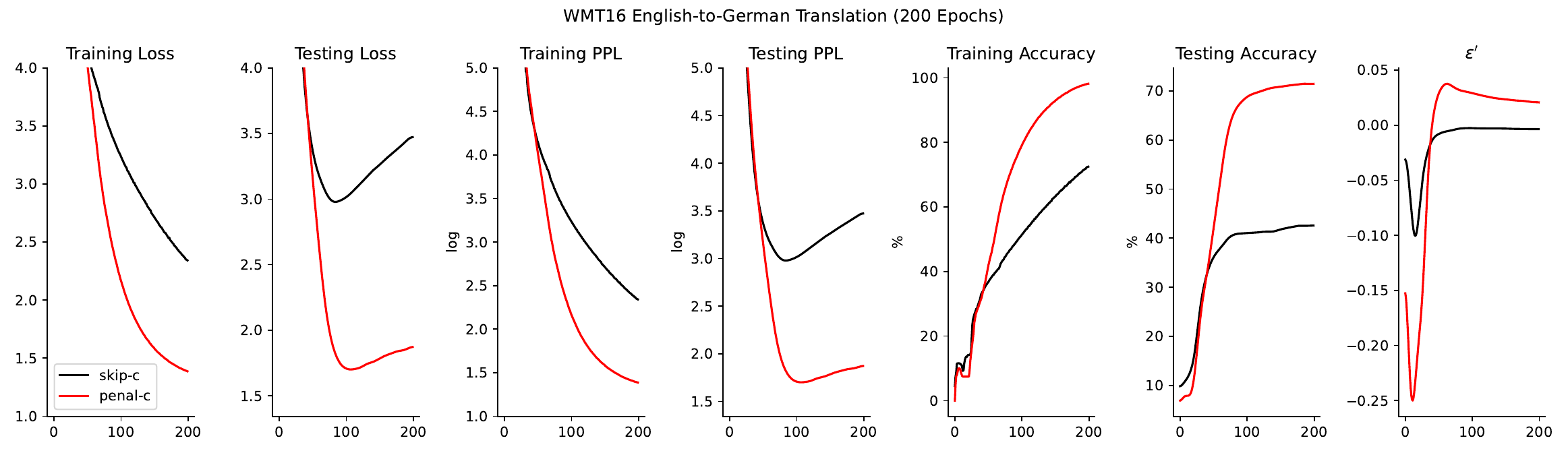}
\end{minipage}
}\\
\subfigure[WMT16 English to Germany Translation (Setting S)]{
\begin{minipage}{\textwidth}
\centering
\includegraphics[width=1.0\linewidth]{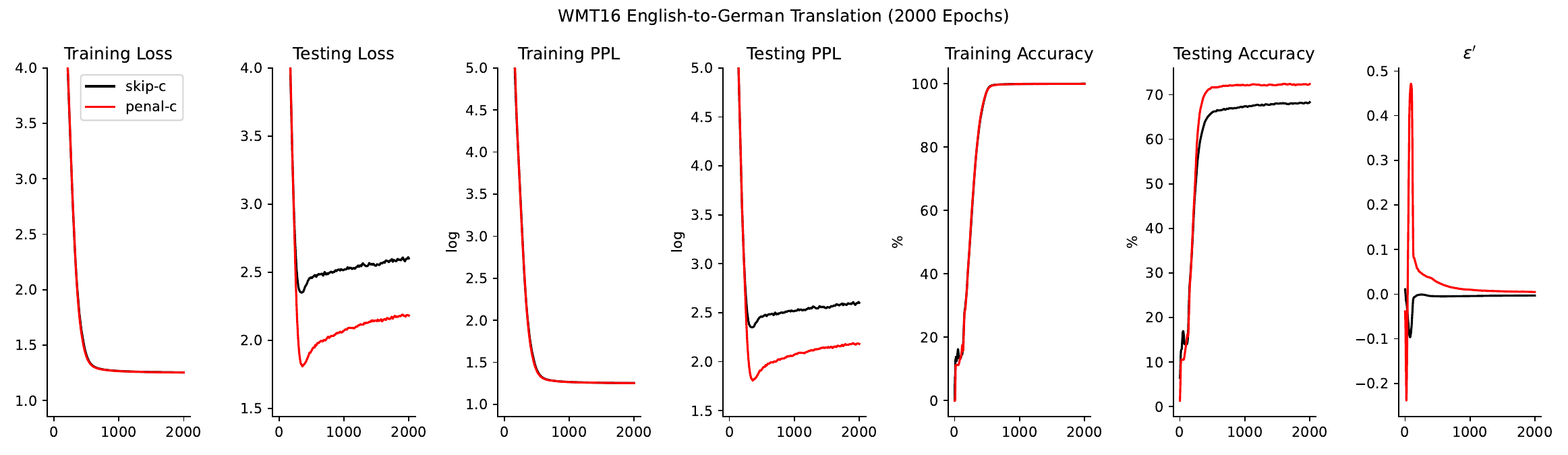}
\end{minipage}
}\\
\subfigure[WMT16 Germany to English Translation (Setting Q)]{
\begin{minipage}{\textwidth}
\centering
\includegraphics[width=1.0\linewidth]{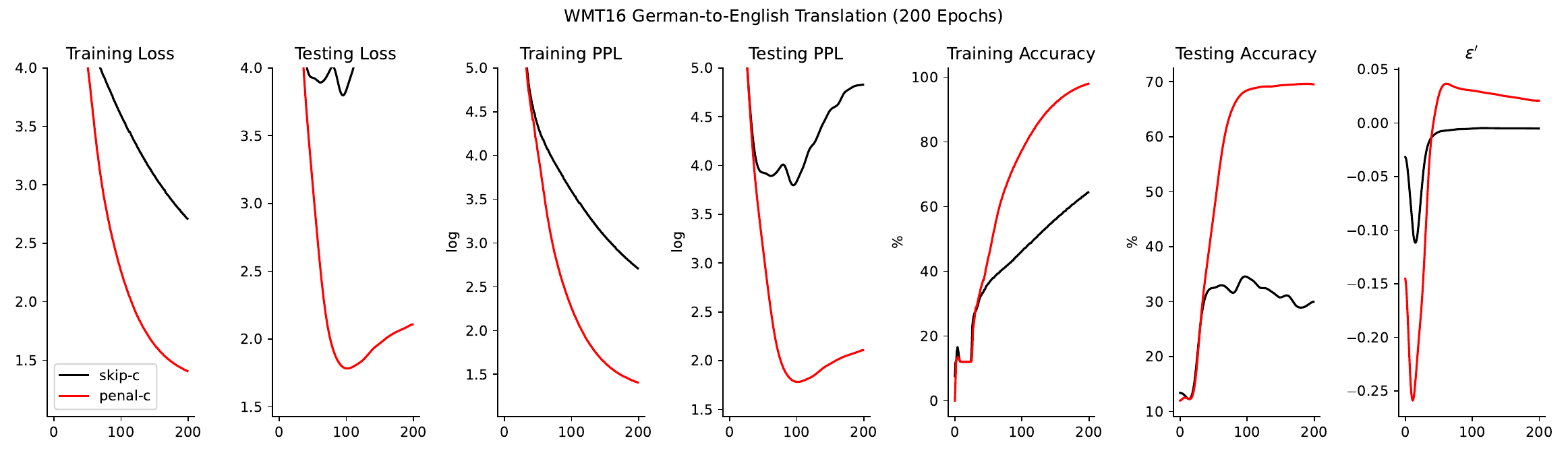}
\end{minipage}
}\\
\subfigure[WMT16 Germany to English Translation (Setting S)]{
\begin{minipage}{\textwidth}
\centering
\includegraphics[width=1.0\linewidth]{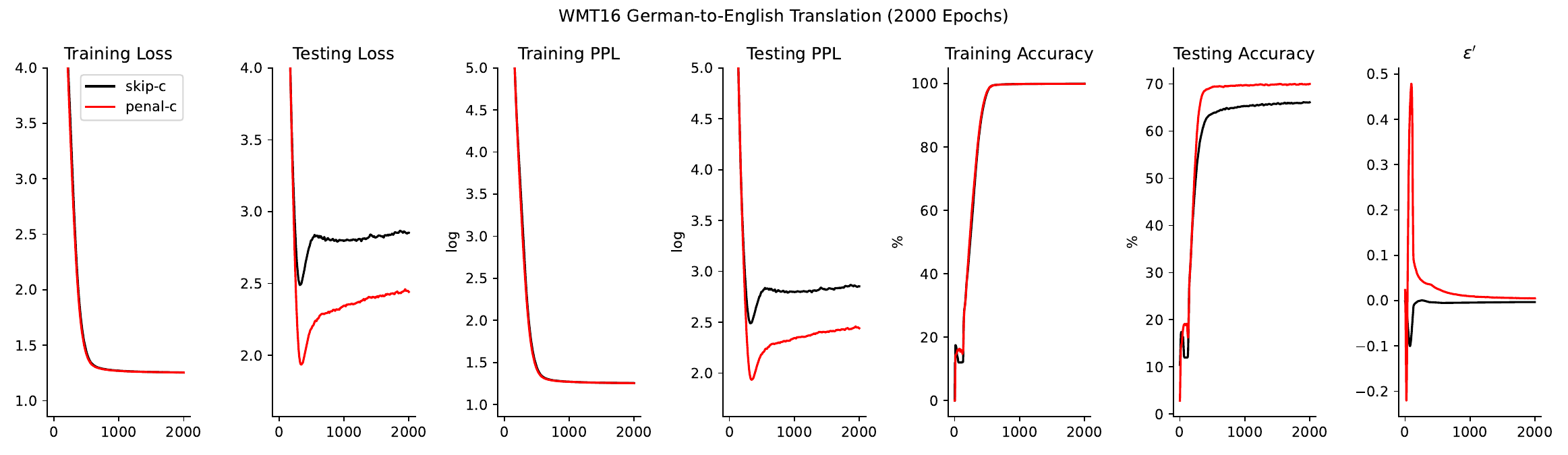}
\end{minipage}
}
\caption{
The training and testing curve on WMT16.
} 
\label{fig:full wmt} 
\end{figure}

\subsection{Image classification}
We flow the implementation in public project\footnote{\url{https://github.com/rwightman/pytorch-image-models}} to conduct all experiments on ImageNet1k.
In order to speed up training and also save memory, the automatically mixed precision~(AMP) computing mode is enabled.
No pre-trained models are used. 
More configurations have been listed in Tab.~\ref{tab:imagenet_cfg}.
\begin{table}[]
\centering
\begin{tabular}{l|cccc}
training config        & ResNet50 & ViT-S/16-224 & DeIT-S/16-224 & Swin-S/4-7-224 \\ \hline
optimizer              & sgd      & adamw        & adamw         & adamw          \\
base learning rate     & 0.6      & 0.001        & 0.001         & 0.001          \\
weight decay           & 5e-5     & 0.05         & 0.05          & 0.05           \\
optimizer momentum     & 0.9      & (0.9,0.999)  & (0.9,0.999)   & (0.9,0.999)    \\
batch size             & 1024     & 1024         & 1024          & 1024           \\
training epoch         & 300      & 300          & 300           & 300            \\
learning rate schedule & cosine   & cosine       & cosine        & cosine         \\
warmup epochs          & 10       & 10           & 5             & 20             \\
warmup schedule        & linear   & linear       & linear        & linear         \\
auto augment           & true     & false        & false         & false          \\
rand augment           & false    & true         & true          & true           \\
mixup prob.            & 0.2      & 0.2          & 0.2           & 0.2            \\
cutmix prob.           & 0.0      & 0.0          & 1.0           & 1.0            \\
random erasing prob.   & 0.0      & 0.25         & 0.25          & 0.25           \\
label smoothing        & 0.1      & 0.1          & 0.1           & 0.1            \\
EMA                    & disabled & 0.999        & disabled      & 0.999         
\end{tabular}
\caption{ImageNet1k training settings.}
\label{tab:imagenet_cfg}
\end{table}

\subsection{Model degradation}
We adapt the basic skip connection model, i.e., ResNet~\cite{he2016deep}, as a baseline.
In order to better invest the performance of different depth models, 
we build seven different models with different depths, i.e., $L\in \{18, 20, 32, 34, 44, 50, 56\}$.
We follow the implementation for CIFAR10 in public project\footnote{\url{https://github.com/akamaster/pytorch_resnet_cifar10}} and CIFAR100 in public project\footnote{\url{https://github.com/weiaicunzai/pytorch-cifar100}}.
The SGD optimizer with a momentum of 0.9 is used for both datasets, and the weight decay is set at 0.0001 for CIFAR10 and 0.0005 for CIFAR100, respectively.
The initial learning rate is 0.1, the batch size is 128 and $\tau$ is $3 \times 10^{-9}$ for all experiments.
We train the model for 200 epochs, and the learning rate will be multiplied by 0.1 at epochs 100, and 150 in the experiments on CIFAR10, and will be multiplied by 0.2 at epochs 60, 120, and 160 in the experiments on CIFAR100.

\end{document}